\documentclass{article}

\usepackage{microtype}
\usepackage{graphicx}
\usepackage{subcaption}
\usepackage{booktabs} 
\usepackage{wrapfig}

\usepackage[preprint]{icml2026}
\usepackage[
  colorlinks=true,
  linkcolor=cyan,
  citecolor=cyan
]{hyperref}

\usepackage{url}

\usepackage{algorithm}
\usepackage{algorithmic}
\newcommand{\TO}{\textbf{to}}

\usepackage{amsmath}
\usepackage{amsfonts}
\usepackage{dsfont}
\usepackage{cancel}

\usepackage[most]{tcolorbox}
\tcbuselibrary{listingsutf8} 
\usepackage{booktabs}
\usepackage{multirow}
\usepackage{graphicx}
\usepackage{url}

\usepackage[table]{xcolor}
\usepackage[normalem]{ulem}

\newcommand{\first}[1]{\colorbox{green!25}{#1}}
\newcommand{\second}[1]{\colorbox{blue!15}{#1}}
\newcommand{\third}[1]{\colorbox{gray!20}{#1}}

\newcommand{\worst}[1]{\textcolor{red}{\uline{\textcolor{black}{#1}}}}

\usepackage{amsmath}
\usepackage{amssymb}
\usepackage{mathtools}
\usepackage{amsthm}

\usepackage{xcolor}
\usepackage{mdframed}

\definecolor{obj}{RGB}{220,245,255}    
\definecolor{cons}{RGB}{235,220,255}   

\newmdtheoremenv[
  backgroundcolor=white!95!teal!5,
  linecolor=blue!40!teal,
  linewidth=1pt,
  roundcorner=6pt,
  innertopmargin=5pt,
  innerbottommargin=5pt,
  innerrightmargin=10pt,
  innerleftmargin=10pt,
  frametitlebackgroundcolor=blue!10,
  frametitlefont=\bfseries,
  shadow=true,
  shadowsize=3pt,
  shadowcolor=black!15,
  skipabove=10pt,
  skipbelow=10pt
]{claimwithframe}{Theorem}

\newmdtheoremenv[
  backgroundcolor=blue!95!teal!5,
  linecolor=blue!40!teal,
  linewidth=1pt,
  roundcorner=6pt,
  innertopmargin=5pt,
  innerbottommargin=5pt,
  innerrightmargin=10pt,
  innerleftmargin=10pt,
  frametitlebackgroundcolor=blue!10,
  frametitlefont=\bfseries,
  shadow=true,
  shadowsize=3pt,
  shadowcolor=black!15,
  skipabove=10pt,
  skipbelow=10pt
]{proofwithframe}{Proof}

\usepackage{pifont}
\newcommand{\cmark}{\ding{51}}  
\newcommand{\xmark}{\ding{55}}  

\newtheorem{theorem}{Theorem}[section]

\newtheorem{lemma}[theorem]{Lemma}

\usepackage[textsize=tiny]{todonotes}

\icmltitlerunning{Planning with Language and Generative Models}

\begin{document}

\twocolumn[
  \icmltitle{Planning with Language and Generative Models: Toward General Reward-Guided Wireless Network Design}



  \icmlsetsymbol{equal}{*}

\begin{icmlauthorlist}
  \icmlauthor{Chenyang Yuan}{yyy}
  \icmlauthor{Xiaoyuan Cheng}{comp}
\end{icmlauthorlist}

\icmlaffiliation{yyy}{University of Sheffield}
\icmlaffiliation{comp}{University College London}

\vspace{-0.3em}
\begin{center}
Equal Contributing Authors
\end{center}



  \icmlcorrespondingauthor{Chenyang Yuan}{\url{cyuan7@sheffield.ac.uk}}
  \icmlcorrespondingauthor{Xiaoyuan Cheng}{\url{ucesxc4@ucl.ac.uk}}


  \vskip 0.3in
]



\printAffiliationsAndNotice{}  

\begin{abstract}
Intelligent access point (AP) deployment remains challenging in next-generation wireless networks due to complex indoor geometries and signal propagation. We firstly benchmark general-purpose large language models (LLMs) as agentic optimizers for AP planning and find that, despite strong wireless domain knowledge, their dependence on external verifiers results in high computational costs and limited scalability. Motivated by these limitations, we study generative inference models guided by a unified reward function capturing core AP deployment objectives across diverse floorplans. We show that diffusion samplers consistently outperform alternative generative approaches. The diffusion process progressively improves sampling by smoothing and sharpening the reward landscape, rather than relying on iterative refinement, which is effective for non-convex and fragmented objectives.
Finally, we introduce a large-scale real-world dataset for indoor AP deployment, requiring over $50k$ CPU hours to train general reward functions, and evaluate in- and out-of-distribution generalization and robustness. Our results suggest that diffusion-based generative inference with a unified reward function provides a scalable and domain-agnostic foundation for indoor AP deployment planning \footnote{Code and dataset will be shared soon.}.

\end{abstract}

\section{Introduction}

Intelligent network planning has become a vital AI-driven capability for future wireless communication systems \cite{giordani2020toward,saad2019vision,roh2014millimeter}, enabling automated and optimized deployment of network infrastructure. The long-term goal is to automatically determine physically feasible and performance-optimal access point (AP) configurations in real indoor environments based on floorplans, AP specifications, and realistic signal propagation models \cite{bakirtzis2024ai,fang2024self}.

\textbf{However, indoor AP planning is inherently difficult.} Indoor spaces contain complex and highly irregular geometries, diverse materials, and dense obstacles that create strong spatial heterogeneity in radio frequency propagation \cite{adickes2002optimization, aragon2017indoor}. These factors couple spatial structure with physical constraints, yielding a continuous, non-convex, and strongly multi-modal search space \cite{tam1995propagation, zhang2021fundamental}. As a result, small changes in AP placement can lead to large, non-linear coverage differences, and feasible regions are often fragmented or counterintuitive. This makes AP deployment a reasoning-intensive planning problem that is difficult to solve reliably by simple heuristics or convex optimization alone \cite{qiu2024large, yuanllm, wang2025large}.

\begin{table*}[h]
\caption{Comparison of methods across key capabilities. Convex optimization methods are inexpensive but have limited reasoning ability and often fall into suboptimal solutions in the highly non-convex AP planning space. LLM reasoning provides strong high-level inference and interactivity but requires frequent interaction with a physical simulator (verifier) for verification, resulting in high inference cost. In contrast, weighted sampling and diffusion methods offer efficient, scalable exploration of continuous spaces and are verifier-free, though their effectiveness depends strongly on the quality of the reward signal guiding the search.}
\centering
\small
\definecolor{lightcyan}{RGB}{220,245,245}
\begin{tabular}{lccccc}
\toprule
\textbf{Methods} &
\textbf{Reasoning abilities} &
\textbf{Suboptimal} &
\textbf{Inference cost} &
\textbf{Scalability} &
\textbf{Verifier} \\
\midrule
Convex optimization    & low  & \cmark & low  & low  & \xmark \\
LLM reasoning          & high & \xmark & high & high & \cmark \\
Weighted sampling      & high & \xmark & low  & high & \xmark \\
\rowcolor{lightcyan}
Diffusion sampling     & high & \xmark & low  & high & \xmark \\
\bottomrule
\end{tabular}
\label{tab:method_comparison}
\end{table*}

\textbf{LLM Reasoning.} Large language models (LLMs) \cite{kojima2022large,wei2022emergent, achiam2023gpt} have demonstrated remarkable general-purpose reasoning abilities, with prompting techniques enabling them to perform structured multi-step inference across diverse tasks, including math \cite{guo2025deepseek, ahn2024large}, physics \cite{lai2025panda}, and engineering \cite{xia2025buildarena}. Chain-of-thought prompting \cite{wei2022chain} allows models to decompose complex problems into intermediate steps, while strategies such as self-consistency aggregate multiple reasoning trajectories to improve reliability \cite{wang2022self}. Despite these advances, recent evaluations show that LLMs remain weak in spatial reasoning, struggling with distances, directions, and multi-step spatial tasks \cite{wu2024mind}. This poses a particular challenge for AP planning, which relies on accurate interpretation of floorplan geometry, obstacle layouts, and spatially varying signal propagation. These limitations motivate the need to systematically benchmark LLMs on the spatial and physical reasoning skills essential for AP deployment.

\textbf{Inference via Generative Models.} In parallel to LLM reasoning, generative inference models offer an alternative approach to plan by directly searching over the continuous solution space.  Unlike LLMs, which rely on semantic priors and symbolic reasoning, generative planners operate by maintaining and refining an explicit distribution over candidate placements \cite{ho2020denoising, janner2022planning, dong2024diffuserlite}. To be effective, however, these models require a clear guidance signal that shapes the distribution toward high-quality deployment configurations, typically in the form of a reward or objective function \cite{piche2018probabilistic, ubukata2024diffusion, akhound2024iterated, pan2024model, cheng2025safe}. Two representative families of methods widely used in spatial reasoning and continuous optimization are diffusion models and weighted sampling. Diffusion models refine noisy samples step by step toward better solutions, guided by the reward. Weighted sampling instead draws many candidates and keeps those with higher rewards. Both provide simple and effective ways to search continuous, multi-modal planning spaces \cite{lioutas2023criticsequentialmontecarlo,naesseth2014sequential}. However, a key research gap remains: due to the complexity and diversity of indoor environments, it is difficult to design a generalized reward function that can reliably guide planning across different floorplans and propagation conditions. In practice, the quality of the reward (or guidance signal) becomes the bottleneck, limiting how well generative planners can perform.

\begin{figure*}[ht]
    \centering
    \includegraphics[width=0.9\linewidth]{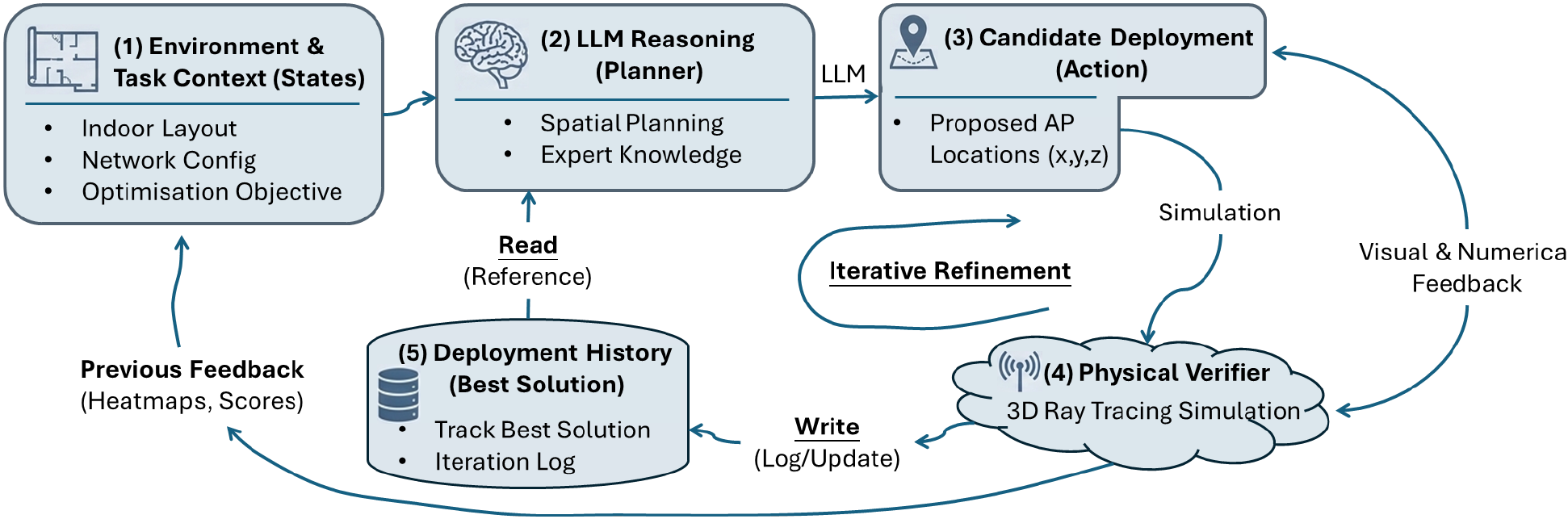}
    \caption{The LLM operates as a a agentic planner that takes the environment and task context as input, proposes candidate AP deployments, and iteratively refines them based on feedback from a physical verifier. Deployment actions are evaluated using 3D ray-tracing simulator (verifier), producing quantitative scores and visual coverage maps. Evaluation results are written to a deployment history module, which tracks the best solution and provides reference feedback for subsequent iterations, forming a closed-loop optimization.}
    \label{fig:agentic_flow}
\end{figure*}

Given this research gap, we revisit the AP planning problem from multiple perspectives (see summary in Table~\ref{tab:method_comparison}). We re-evaluate traditional optimization-based baselines, including both generic search strategies and gradient-based methods. The results indicate that the highly non-convex and fragmented structure of indoor deployment spaces makes classical optimization techniques alone ineffective when applied at scale. Building on these observations, our work makes three contributions. First, we benchmark LLMs on spatial reasoning tasks central to AP planning and demonstrate how agentic workflows can unlock the wireless domain knowledge embedded in these models. Second, we introduce a unified reward-guided evaluation framework to systematically compare diffusion-based and weighted-sampling generative planners. Our results show that reward design is the central bottleneck in generative inference for indoor AP planning. 
Crucially, diffusion sampling outperforms weighted sampling for reasons beyond iterative refinement.
Its advantage arises from the implicit mollification of the reward-induced score during the diffusion process, which yields a smoother optimization direction over highly fragmented and non-smooth reward landscapes.
Finally, we construct a $50k$ CPU-hour real-world dataset that enables the training of general reward functions and use it to demonstrate strong in-distribution and out-of-distribution generalization for indoor AP deployment. Our results highlight the potential of a single general reward function to serve as a scalable, domain-agnostic foundation for future indoor AP deployment planning.


\section{Preliminaries}
Before presenting our approach, we outline the problem formulation and the core elements that govern indoor AP planning. 

\textbf{Benchmark Problem Formulation.} We consider an indoor deployment planning problem over a bounded three-dimensional region $\mathcal{D} \subseteq \mathbb{R}^{3} $. A number $N \in \mathbb{N}$ of APs are deployed within this region. The deployment configuration is denoted by $\mathbf{P}=\left \{ \mathbf{p}_{1},\mathbf{p}_{2},\dots,\mathbf{p}_{N} \right \} $, where $\mathbf{p}_{n}=\left ( x_{n},y_{n},z_{n} \right ) $ represents the location of the $n$-th AP.

\textit{1. Coverage Modeling Based on Pathloss.} Indoor signal propagation from an AP experiences location-dependent attenuation caused by distance and obstacles, using the deterministic channel model. The coverage function $R_{n}$ defines a location as covered if its pathloss is below a threshold $\eta$, representing the maximum tolerable attenuation for reliable reception. 

\textit{2. Co-channel Interference.} In dense indoor environments, overlapping coverage of APs sharing on the same carrier frequency leads to co-channel interference. The aggregate interference at location $\mathbf{r} = \left ( x_{r},y_{r},z_{r}  \right )$ is denoted by $I\left ( \mathbf{r}  \right ) $, which is constrained to remain below a threshold $\xi$ to ensure robust shared-spectrum operation. 

\textit{3. Throughput Constraint.} While coverage and interference constraints ensure signal reachability and spatial deployment rationality, practical indoor services also require a minimum achievable data rate. The downlink throughput at location $\mathbf{r}$ is modeled as $T(\mathbf{r}) = B \log_2\!\left(1 + \gamma _{\mathbf{r}}\right)$, where $B$ denotes the system bandwidth and $\gamma _{\mathbf{r}}$ is the resulting signal-to-interference-plus-noise ratio (SINR). Service feasibility is enforced by requiring $ T\left ( \mathbf{r}  \right ) \ge T_{\mathrm{min} }$ over the area of interest. 

Combining the above conditions, the AP deployment problem is formulated as the following spatial optimization problem:
\begin{align}
\text{\textcolor{cyan}{(P1)}}\quad 
\max_{\mathbf{P}} \quad 
& \colorbox{obj}{$R(\mathbf{P})$} \label{eq:coverage} \tag{1a} \\[-0.2em]
\text{s.t.}\quad 
& \colorbox{cons}{$I(\mathbf{r}) \le \xi$}, 
&& \forall\, \mathbf{r} \in \mathcal{D} \label{eq:constraint_1} \tag{1b} \\[-0.2em]
& \colorbox{cons}{$T(\mathbf{r}) \ge T_{\min}$}, 
&& \forall\, \mathbf{r} \in \mathcal{D} \label{eq:constraint_2} \tag{1c} \\[-0.2em]
& x_n \in [x_{\min}, x_{\max}], 
&& \forall\, n \label{eq:constraint_3} \tag{1d} \\[-0.2em]
& y_n \in [y_{\min}, y_{\max}], 
&& \forall\, n \label{eq:constraint_4} \tag{1e} \\[-0.2em]
& z_n \in [z_{\min}, z_{\max}], 
&& \forall\, n \label{eq:constraint_5} \tag{1f} \\[-0.2em]
& \|\mathbf{p}_i - \mathbf{p}_j\| \ge d_{\min}, 
&& \forall\, i \neq j \in \mathbb{N} \label{eq:constraint_6} \tag{1g}
\end{align}

The optimization problem \textcolor{cyan}{(P1)} is highly \textbf{non-convex and non-continuous}. The non-convexity stems from the nonlinear pathloss model and the coupled interference and throughput constraints in \eqref{eq:constraint_1} and \eqref{eq:constraint_2}, while the non-continuity is primarily introduced by the minimum separation constraint in \eqref{eq:constraint_6} and the fragmented indoor domain $\mathcal{D}$. Moreover, the feasible region is further constrained by the three-dimensional deployment bounds in equations \ref{eq:constraint_3}\textcolor{cyan}{-}\ref{eq:constraint_5} and the pointwise service requirements. Consequently, obtaining a globally optimal solution using conventional convex optimization techniques is intractable.
Motivated by this observation, we investigate the use of LLMs and generative models for indoor AP deployment.

\section{Method}
Our method consists of two main components: (1) we first introduce a novel agentic framework for AP deployment with the assistance of a verifier; (2) we then propose a generalized reward function that enables generative inference without relying on any verifier. 

\subsection{Agentic Workflow for LLM Reasoning}

Figure~\ref{fig:agentic_flow} illustrates the proposed agentic workflow for indoor multi-AP deployment. The workflow is organized as a closed-loop reasoning system, where a large language model (LLM) interacts with a physical verifier through iterative proposal, evaluation, and refinement. Rather than directly optimizing numerical variables, the LLM operates as a high-level planner that reasons over structured environmental descriptions and feedback signals.
The workflow consists of five interconnected components:

\textbf{(1) Environment and Task Context (States).}  
The workflow starts with an environment statement encoding the indoor layout, network configuration, and optimization objective. This information defines the deployment domain, constraints, and performance criteria, forming the state that grounds the LLM’s reasoning. In later iterations, the state is augmented with feedback summaries from previous evaluations, such as coverage heatmaps and performance scores.
\textbf{(2) LLM Reasoning (Planner).}  
Conditioned on the current state, the LLM acts as a reasoning-driven planner that interprets spatial layouts, task objectives, and accumulated feedback. 
Guided by embedded expert knowledge, it reasons at the semantic and spatial level, i.e., prioritizing uncovered regions and avoiding redundant AP deployments, rather than performing fine-grained signal-level optimization.
\textbf{(3) Candidate Deployment (Action).}  
The LLM outputs a candidate deployment specified by AP locations $\mathbf{P}$ in a structured format. The proposed deployment respects the fixed AP count and deployment constraints, and serves as an actionable configuration for subsequent evaluation.
\textbf{(4) Physical Verifier (Evaluation).}  
The candidate deployment is evaluated by a physical verifier based on 3D ray-tracing simulation. The verifier produces quantitative performance metrics along with visual outputs, such as coverage heatmaps and binary coverage maps, which are used to assess deployment quality.
\textbf{(5) Deployment History (Memory).}  
Evaluation results are recorded in a deployment history module that logs iterations and tracks the best-performing deployment. This memory is accessible to the LLM in subsequent iterations, enabling reference to prior outcomes and supporting iterative refinement.

Through iterative interaction among these components, the workflow realizes an evaluation-driven refinement loop. Visual and numerical feedback from the verifier are fed back into the environment state, guiding the LLM to progressively improve AP placement decisions. The process terminates when a predefined stopping criterion is met, such as convergence of coverage performance or a maximum number of iterations, and the best recorded deployment is returned as the final solution. See examples in Appendix~\ref{append:demo_of_agentc_reasoning}.

\subsection{Inference via Generative Models}
The core of the generative model lies in the construction of an appropriate reward function. In our setting, we define the reward function $r(\mathbf{P}, \mathcal{D})$ for region $\mathcal{D}$ as a normalized scalar value given by
\begin{equation}
    r(\mathbf{P},\mathcal{D}) \propto \int_{\mathcal{D}} \colorbox{obj}{$R(\mathbf{P})$} \cdot \colorbox{cons}{$\mathds{1}_{\{ I(\mathbf{r}) \le \xi \}}$}  \cdot \colorbox{cons}{$\mathds{1}_{\{ T(\mathbf{r}) \ge T_{\min} \}}$} d \mathbf{r},
\end{equation}
where $R(\mathbf{P})$ denotes the coverage function in \eqref{eq:coverage}, $\mathds{1}_{\{ I(\mathbf{r}) \le \xi \}}$ and $\mathds{1}_{\{ T(\mathbf{r}) \ge T_{\min} \}}$ are indicator functions that enforce the interference and throughput constraints in \eqref{eq:constraint_1} and \eqref{eq:constraint_2}, respectively. The design and architecture of reward function are presented in Appendix~\ref{append:reward_design}.  The learned reward landscapes are illustrated in Figure~\ref{fig:reward_landscape}. It reflects the problem's nature is highly non-convex and non-continuous. 


\begin{figure}[h]
    \centering
    \includegraphics[width=0.95\linewidth]{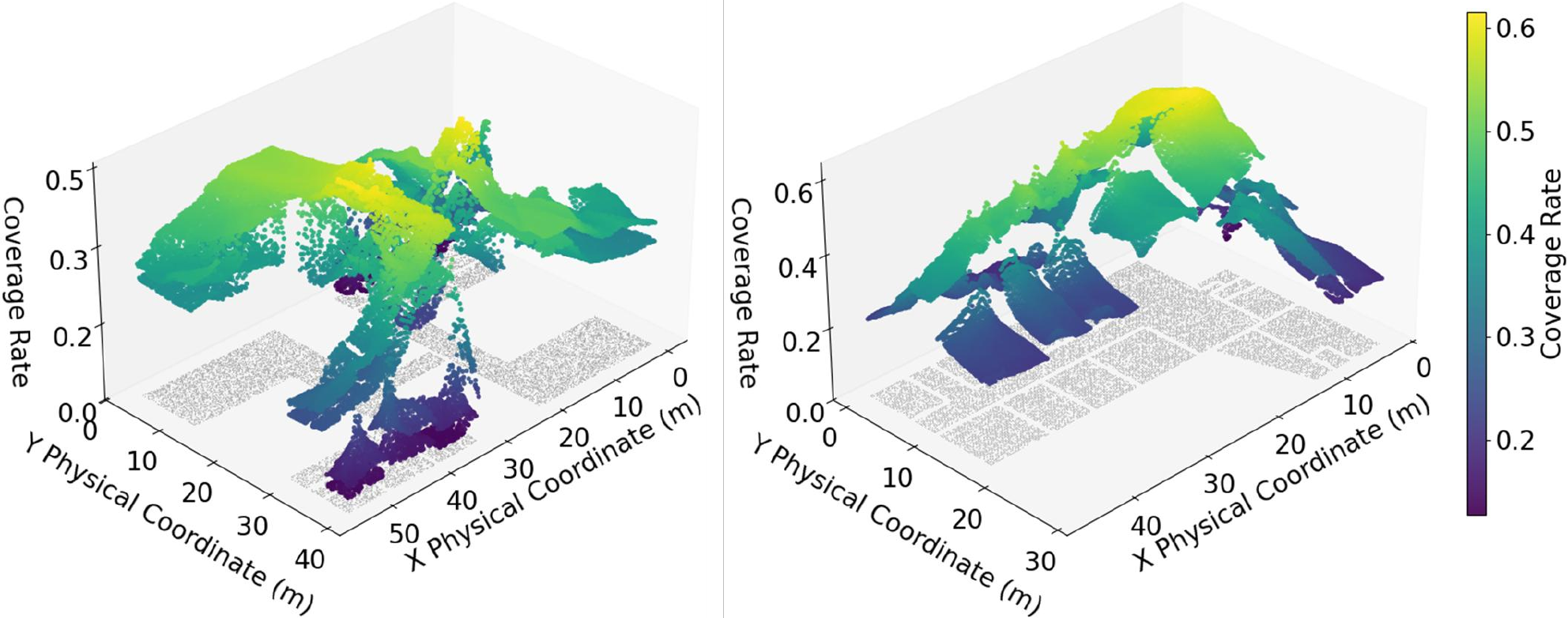}
    \caption{Non-convex and fragmented reward landscape over a given floor plan. The surface shows the reward value associated with each candidate access point (AP) location, where warmer colors indicate higher rewards. The shaded region projected onto the ground plane corresponds to the underlying floor plan, illustrating how spatial constraints shape the reward distribution.}
    \label{fig:reward_landscape}
\end{figure}
\vspace{-5pt}

Based on the above reward formulation, the distribution for region $\mathcal{D}$ over AP deployments is defined via Gibbs measure,
\begin{equation}
    p(\mathbf{P}|\mathcal{D}) = \frac{e^{\beta r(\mathbf{P}, \mathcal{D})}}{\mathcal{Z}}, \label{eq:gibbs_distribution}
\end{equation}
where $\beta$ is the temperature factor and $\mathcal{Z} = \int_{\mathbf{P}} e^{\beta r(\mathbf{P}, \mathcal{D}))} d\mathbf{P}$ is the partition function.  Sampling from the Gibbs distribution in \eqref{eq:gibbs_distribution} provides a principled way to generate high-quality AP deployment configurations. However, direct sampling is \textit{intractable} due to the partition function $\mathcal{Z}$. To address this challenge, we investigate two generative inference strategies: weighted and diffusion sampling.

\textbf{Weighted Sampling.} We first consider a weighted sampling strategy based on Sequential Monte Carlo (SMC) \citep{cappe2007overview}, which provides a principled mechanism for approximating the Gibbs distribution in \eqref{eq:gibbs_distribution}. In this framework, a population of particles (candidate AP deployments) is iteratively propagated and reweighted according to the reward function. 

Specifically, let $\{\mathbf{P}_t^{(k)}, w_t^{(k)}\}_{k=1}^K$ denote the set of particles and their associated weights at iteration $t$. At initialization ($t=0$), particles are sampled from a proposal distribution $q_0(\mathbf{P}|\mathcal{D})$, which is a standard Gaussian over the feasible deployment space, and are assigned equal weights $w_0^{(k)} = \frac{1}{K}$. At each subsequent iteration, particles are propagated according to a proposal transition kernel $q_t(\mathbf{P}_t | \mathbf{P}_{t-1}, \mathcal{D})$, yielding candidate deployments $\mathbf{P}_t^{(k)}$. The importance weight of each particle is then updated according to the Gibbs target distribution as
\begin{equation}
    \tilde{w}_t^{(k)} = w_{t-1}^{(k)} e^{\beta r(\mathbf{P}_t^{(k)}, \mathcal{D}))}, \; \text{normalize} \; 
w_t^{(k)} = \frac{\tilde{w}_t^{(k)}}{\sum_{j=1}^K \tilde{w}_t^{(j)}}. \label{eq:normalization}
\end{equation}
Adaptive resampling is performed based on the effective sample size. The expected AP deployment under the empirical particle approximation in \eqref{eq:normalization} is then given by
\begin{equation}
\mathbb{E}[\mathbf{P}_t|\mathcal{D}] = \sum_{k=1}^K w_t^{(k)} \mathbf{P}_t^{(k)} .
\end{equation} 
We choose different proposal distributions (i.e., Gaussian and Langevin) for sampling the target distribution (see Algorithms~\ref{alg:smc_gaussian} and \ref{alg:smc_langevin}).



    

\textbf{Diffusion Sampling.} Unlike weighted sampling, which relies on explicit reweighting of independently proposed samples, we adopt a diffusion-based generative approach that iteratively refines deployment configurations through a reverse variance-exploding (VE) stochastic differential equation (SDE) \citep{song2020score}. Rather than modeling an explicit forward noising process over AP placements, we directly operate on the reverse dynamics, where candidate deployments are progressively denoised and guided toward target distribution $p_0(\mathbf{P}_0|\mathcal{D})$.  In VE setting (see Appendix~\ref{append:diffusion}), the score function is represented as 

\begin{equation}\label{eq:score_function}
\begin{aligned}
\nabla_{\mathbf{P}} \log p_t (\mathbf{P}_t|\mathcal{D})
&= \frac{\mathbb{E}_{\mathbf{P}_{0|t} \sim \mathcal{N}(\mathbf{P}_t, \sigma_t^2I)}[\nabla_{\mathbf{P}} e^{\beta r(\mathbf{P}_{0|t}, \mathcal{D})}]}
        {\mathbb{E}_{\mathbf{P}_{0|t} \sim \mathcal{N}(\mathbf{P}_t, \sigma_t^2I)}[e^{\beta r(\mathbf{P}_{0|t}, \mathcal{D})}]}\, .
\end{aligned}
\end{equation}

Here, detailed derivation is shown in~\ref{eq:derivation_score_function}, $p_0(\mathbf{P}_0|\mathcal{D}) \propto e^{\beta r(\mathbf{P}_0,\mathcal{D})}$ denotes the target Gibbs distribution induced by the reward function, and $\mathbf{P}_{0|t}$ denotes a clean deployment sampled from the Gaussian posterior centered at the noisy configuration $\mathbf{P}_t$. The partition function $\mathcal{Z}$ cancels out in the score expression, yielding a normalized reward-guided score function that can be evaluated without requiring $\mathcal{Z}$.

Directly obtaining exact form in \eqref{eq:score_function} is intractable, and the score function can estimated via Monte Carlo estimation uses the same set of samples from $\mathcal{N}(\mathbf{P}_t, \sigma_t^2I)$, 
\begin{equation}\label{eq:score_function_estimation}
\begin{aligned}
\widehat{\nabla_{\mathbf{P}} \log p_t}(\mathbf{P}_t \mid \mathcal{D})
&\approx \frac{\sum_{k = 1}^K \nabla_{\mathbf{P}}e^{\beta r(\mathbf{P}_{0|t}^{(k)}, \mathcal{D})}}
{\sum_{k = 1}^K e^{\beta r(\mathbf{P}_{0|t}^{(k)}, \mathcal{D})}}.
\end{aligned}
\end{equation}

where $ \mathbf{P}_{0|t}^{(k)} \sim \mathcal{N}(\mathbf{P}_t, \sigma_t^2I),
\ \text{for all}\ k$, and $\mathbf{P}_{0|t}^k$ is an i.i.d particle from reparametrization trick (see \eqref{eq:score_function}) for estimating score function. Plugging the score function to reverse-time SDE in~\ref{eq:reverse_SDE} can sample AP configuration. See implementation in Algorithm~\ref{alg:reverse_diffusion}.

\textbf{Permutation Invariance in Sampling.}
Because APs are indistinguishable, permutations of a deployment yield equivalent configurations with identical rewards. Formally,
\[
r(\mathbf{P}, \mathcal{D}) = r(\pi(\mathbf{P}), \mathcal{D}), \quad \forall\, \pi \in \Pi,
\]
where $\Pi$ denotes the set of all permutation group actions over $\{1,\dots,N\}$ and $\pi(\mathbf{P}) = \{\mathbf{p}_{\pi(1)}, \dots, \mathbf{p}_{\pi(N)}\}$ represents the deployment obtained by reordering AP indices (see more details in Appendix~\ref{append:permutation_group}). This invariance property can be exploited to improve sample efficiency for both equations \ref{eq:normalization} and \ref{eq:score_function_estimation}. 

\begin{claimwithframe}[Reward is the Bottleneck.] \label{theorem:reward_bottleneck}
Let the target distribution over the region $\mathcal{D}$ be $p(\mathbf{P}, \mathcal{D}) \propto e^{\beta r(\mathbf{P}, \mathcal{D})}$ and its gradient satisfies the sub-Gaussian property. Then, with probability at least $1-2\delta$, the discrepancy between the generated AP deployment distribution $p$ and the target distribution $\hat{p}_0$ is bounded by
\begin{equation}
\begin{split}
        & D_{\text{KL}}( p(\mathbf{P}, \mathcal{D}) \| \hat{p}_0(\mathbf{P}, \mathcal{D})) \\
        \leq&  \underbrace{\mathcal{O}(\beta^4\varepsilon)}_{\text{error from reward estimation}} 
        + \underbrace{\mathcal{O}(T\log(1/\delta)/|\Pi|K)}_{\text{error from Monte Carlo estimation}}.
\end{split}
\end{equation}
Here, the first error source is drift part of reverse-time SDEs shaping by the true and learned reward functions (i.e., $- \frac{d \sigma_t^2}{dt} \nabla_{\mathbf{P}} \log p(\mathbf{P}_t)$), and the second error is from the Monte Carlo estimation in \eqref{eq:score_function_estimation}. $\varepsilon$ is the uniform bound with learned and true reward functions defined as $| r(\mathbf{P},\mathcal{D})- \hat{r}(\mathbf{P},\mathcal{D})| \leq \varepsilon$, and $|\Pi|$ is the cardinality of permutation group. 
\end{claimwithframe}

Theorem~\ref{theorem:reward_bottleneck} shows that the dominant source of distribution mismatch arises from reward estimation, rather than the diffusion sampler. The error bound of reward estimation is quadratically scaling as $\mathcal{O}(\beta^4 \varepsilon^2)$ with $| r(\mathbf{P}, \mathcal{D}) - \hat{r}(\mathbf{P}, \mathcal{D}) | \leq \varepsilon$ (see details in Theorem~\ref{thm:reward_score_gap}). While the Monte Carlo error decays as $\mathcal{O}(1/| \Pi |K)$ and can be reduced arbitrarily, the reward-induced error accumulates over time and cannot be mitigated by improved sampling alone. Consequently, the quality of the reward function fundamentally limits the performance of reward-guided generative planning. 

\begin{algorithm}[t]
\caption{Diffusion Sampling with VE SDE}
\label{alg:reverse_diffusion}
\begin{algorithmic}[1]
\REQUIRE Number of particles $K$, noise schedule $\{\sigma_t\}_{t=1}^T$, reward $r(\cdot,\mathcal{D})$
\STATE Initialize $\{\mathbf{P}_T^{(k)}\}_{k=1}^K \sim \mathcal{N}(\mathbf{0}, I)$
\FOR{$t = T$ \TO \ $1$}
    \FOR{$k = 1$ \TO \ $K$}
        \STATE Sample $\mathbf{P}_{0|t}^{(j)} \sim \mathcal{N}(\mathbf{P}_t^{(k)}, \sigma_t^2 I),\ j=1,\dots,K$
        \STATE Estimate permutation-invariant score
        \[
        \widehat{\nabla_{\mathbf{P}} \log p_t}
        =
        \frac{
        \sum_{j=1}^K \sum_{\pi \in \Pi}
        \nabla_{\mathbf{P}} e^{\beta r(\pi(\mathbf{P}_{0|t}^{(j)}),\mathcal{D})}
        }{
        \sum_{j=1}^K \sum_{\pi \in \Pi}
        e^{\beta r(\pi(\mathbf{P}_{0|t}^{(j)}),\mathcal{D})}
        }
        \]
        \STATE Reverse SDE update
        \[
        \mathbf{P}_{t-1}^{(k)} = \mathbf{P}_t^{(k)}
        + \frac{d\sigma_t^2}{dt} \widehat{\nabla_{\mathbf{P}} \log p_t}
        + \sqrt{\frac{d \sigma_t^2}{dt}}\,\boldsymbol{\epsilon},
        \quad \boldsymbol{\epsilon}\sim\mathcal{N}(\mathbf{0},I)
        \]
        \ENDFOR
    \ENDFOR
\STATE \textbf{return} $\{\mathbf{P}_0^{(k)}\}_{k=1}^K$
\end{algorithmic}
\end{algorithm}

\section{Experiment}
\label{experiment}

In this section, we conduct a comprehensive benchmark of agentic frameworks built on different large language models (LLMs) and evaluate the effectiveness of reward-guided generative inference. We further compare our approach against conventional algorithms and systematically examine the performance limits of the proposed methods.

\textbf{Benchmark Baselines.} We consider three categories of baselines for comparison.
(1) Agent using different LLMs. In this category, we instantiate the same agentic framework with different large language models, including DeepSeek-V3.2, GPT-4-Turbo, Claude-3-Haiku, Gemini-2.5-Flash, Grok-4, and Qwen-Plus, to evaluate the impact of the underlying LLM on planning performance.
(2) Gradient-based convex optimization. This baseline optimizes AP placement using gradient-based convex optimization on the learned reward function in our framework, serving as a numerical baseline.
(3) Generative methods. We additionally compare against generative approaches for AP placement. These include weighted sampling with different proposal distributions, as well as diffusion sampling. The dataset for reward training is shown in Appendix~\ref{append:data_generation}. 

\paragraph{Evaluation Metrics.}
To quantitatively evaluate AP deployment performance over the indoor region $\mathcal{D}$, we consider both interference robustness and service quality. Interference is measured using the \emph{Interference Occupancy Rate (IOR)}, defined as the average exceedance of co-channel interference above a threshold $\xi$:
\begin{equation}
\label{eq:ior}
\mathrm{IOR} \coloneqq \frac{1}{|\mathcal{D}|}\sum_{\mathbf{r}\in\mathcal{D}} [I(\mathbf{r})-\xi]_+ ,
\qquad
[x]_+ \coloneqq \max(0,x).
\end{equation}
A lower IOR indicates better compliance with interference constraints, with $\mathrm{IOR}=0$ corresponding to full compliance. Service capability is evaluated using the \emph{Throughput Quality Score (TQS)}, which jointly captures the typical throughput level and the spatial extent of service feasibility:
\begin{equation}
\label{eq:tqs}
\mathrm{TQS} \coloneqq
\operatorname{median}_{\mathbf{r}\in\mathcal{D}} T(\mathbf{r})
\cdot
\frac{1}{|\mathcal{D}|}\sum_{\mathbf{r}\in\mathcal{D}}
\mathbb{I}\!\big(T(\mathbf{r}) \ge T_{\min}\big).
\end{equation}
Together, IOR and TQS provide complementary scalar metrics for comparing deployment quality across different planning methods, reflecting both interference control and building-wide throughput performance.

\textbf{Tasks.} We evaluate all methods on indoor AP deployment tasks across four building levels with increasing complexity, covering civil, commercial, and industrial scenarios. Building Level 1 represents a simple environment with minimal structural complexity, Building Level 2 corresponds to a moderately complex environment with denser partitions and obstacles, and Building Level 3-4 models a highly complex industrial environment with severe obstructions and interference. For each building level, we consider multiple AP deployment scales and report runtime, coverage, IOR, TQS, and success rate to enable systematic comparison under increasing environmental complexity. 

\begin{figure*}[h]
    \centering
    \includegraphics[width=0.96\linewidth]{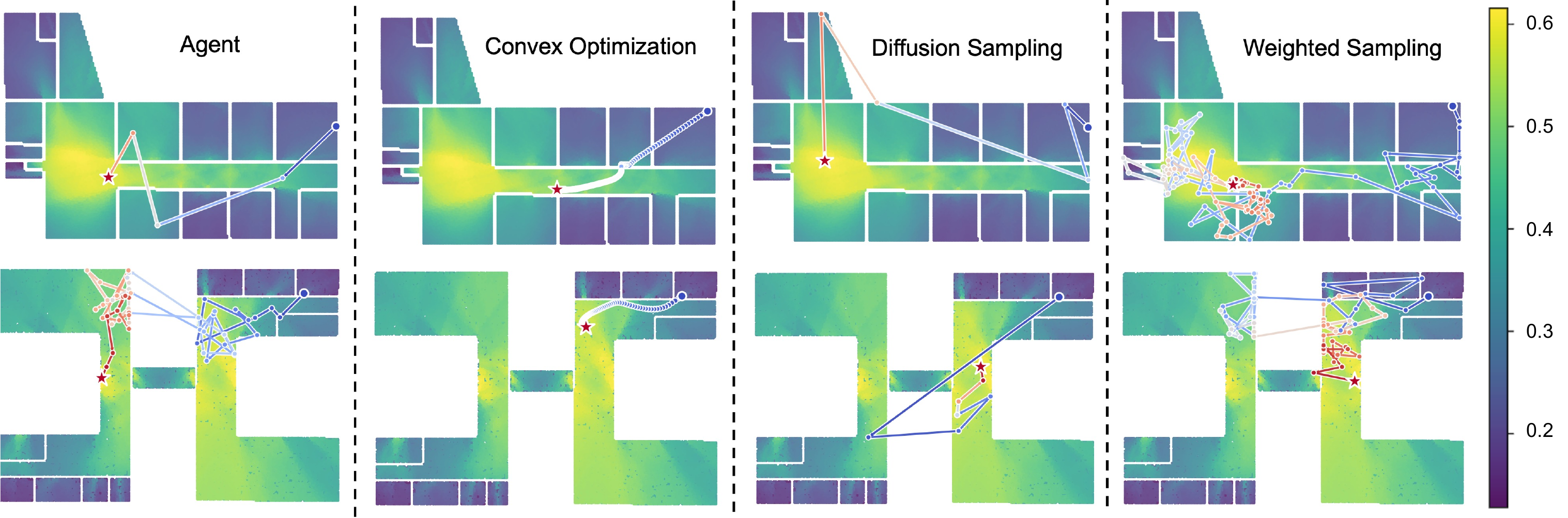}
    \caption{AP placement planning results on two indoor floorplans. The background colormap represents the reward heatmap, where warmer colors indicate higher expected deployment rewards. The star marker denotes the selected AP location, while the trajectories illustrate the intermediate solution refinement processes of different methods. From left to right, we compare the proposed agentic method with convex optimization, diffusion sampling, and weighted sampling. The results reveal distinct planning behaviors across methods and demonstrate the effectiveness (less iterative steps) of the diffusion sampling in identifying high-reward AP placements under complex indoor layouts.}
    \label{fig:AP_iternative_refinment}
\end{figure*}

\begin{figure*}[h]
    \centering
    \includegraphics[width=0.93\linewidth]{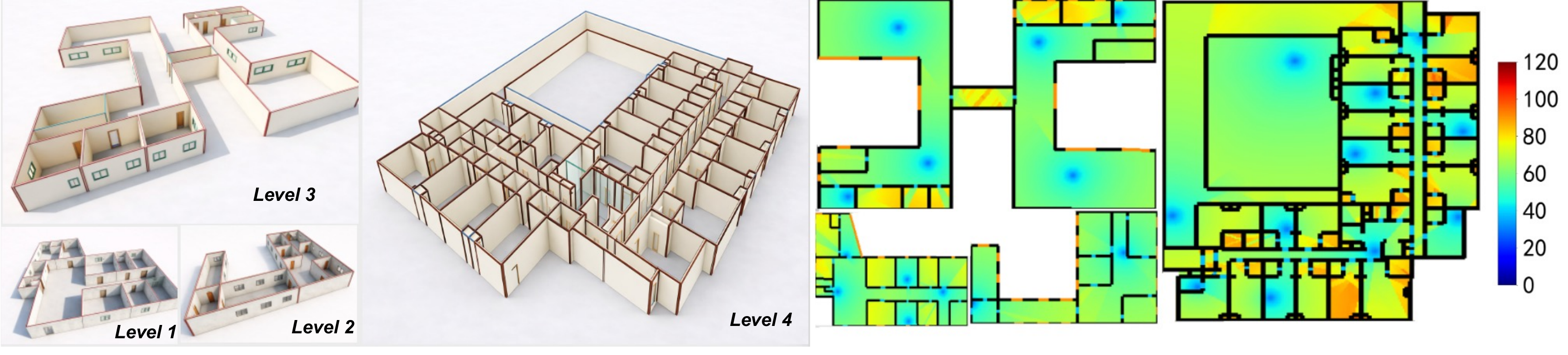}
    \caption{AP planning based on diffusion sampling over a multi-level building. The left panels show the 3D building geometry from Level 1 to Level 4. The right panel visualizes the final deployment produced by diffusion-based planning, where the heat map indicates the pointwise satisfication for building and blue nodes denote the AP locations. See visualization of intermediate reasoning and iterative optimization states in Figures~\ref{fig:reasoning_level_1}, \ref{fig:reasoning_level_2}, \ref{fig:reasoning_level_3}, \ref{fig:reasoning_level_4} in Appendix~\ref{append:Visualization of Intermediate Reasoning and Iterative Optimization States}.}
    \label{fig:diffusion_scaling}
\end{figure*}

\textbf{Result Analysis.} Our analysis is centered on addressing the key research gaps of this work. 

\emph{1. Are conventional optimization methods sufficient for solving the AP deployment problem?}
The answer is \textbf{NO.} The AP deployment problem \textcolor{cyan}{(P1)} induces a highly \emph{non-convex} and severely \emph{fragmented} optimization landscape as shown in Figure~\ref{fig:reward_landscape}, rendering conventional optimization methods inadequate in practice. From the perspective of convex optimization, this inadequacy is fundamental. Convex formulations rely on smooth objectives with a single global optimum and well-behaved gradients, assumptions that are systematically violated in AP deployment. Irregular floorplan geometries, obstacles, and complex propagation effects introduce discontinuities and isolated local optima in the objective landscape (see Figure~\ref{fig:AP_iternative_refinment}), causing convex relaxations or local descent methods to either converge to suboptimal solutions or fail to provide meaningful guarantees.

\begin{table*}[H]
\centering
\caption{Performance comparison under different AP deployment scales in Building Level~4.
Results are reported as mean$\pm$std. Top three methods are highlighted, and the worst is underlined.}
\label{tab:b14_apscale}

\setlength{\tabcolsep}{3pt}
\renewcommand{\arraystretch}{0.9}

\resizebox{\textwidth}{!}{
\begin{tabular}{ccccccc}
\toprule
\textbf{APs} & \textbf{Method} 
& \textbf{Runtime (s)$\downarrow$} 
& \textbf{Coverage (\%)$\uparrow$} 
& \textbf{IOR$\downarrow$} 
& \textbf{TQS$\uparrow$} 
& \textbf{Success (\%)$\uparrow$} \\
\midrule

\multirow{10}{*}{3}
& Diffusion       & \first{31.66$\pm$27.47} & \third{58.98$\pm$0.64} & \third{0.45$\pm$1.69} & 9.31$\pm$4.27 & \first{82.0} \\
& SMC             & \third{71.37$\pm$52.31} & 56.91$\pm$0.47 & 0.97$\pm$1.32 & 6.57$\pm$3.66 & \third{78.0} \\
& SMC-grad        & \second{62.01$\pm$70.45} & 57.15$\pm$0.47 & \second{0.26$\pm$0.37} & \first{10.97$\pm$3.35} & \second{80.7} \\
& DeepSeek-V3.2   & 195.43$\pm$15.33 & 57.35$\pm$0.20 & 1.15$\pm$1.14 & 8.23$\pm$3.65 & 74.0 \\
& GPT-4-turbo     & 597.20$\pm$619.10 & 56.92$\pm$1.50 & 0.96$\pm$1.18 & 9.91$\pm$4.41 & 72.7 \\
& Claude-3-Haiku  & 797.81$\pm$708.08 & 58.00$\pm$6.45 & 1.04$\pm$1.64 & \third{9.93$\pm$5.64} & 70.7 \\
& Gemini-2.5-flash& 225.15$\pm$201.89 & \second{61.69$\pm$10.56} & 1.45$\pm$2.14 & 7.45$\pm$5.83 & 75.3 \\
& Grok-4          & 284.17$\pm$220.43 & 57.51$\pm$0.37 & \first{0.01$\pm$0.23} & \second{10.87$\pm$0.06} & 76.0 \\
& Qwen-plus       & \worst{1142.55$\pm$903.67} & \first{61.89$\pm$12.33} & 1.23$\pm$2.13 & 6.57$\pm$3.66 & 69.3 \\
& Convex Optimization     & 205.54$\pm$26.13 & \worst{44.54$\pm$9.09} & \worst{1.54$\pm$2.17} & \worst{3.93$\pm$2.20} & \worst{58.0} \\
\midrule

\multirow{10}{*}{6}
& Diffusion       & \first{39.91$\pm$12.71} & 67.43$\pm$1.03 & 4.11$\pm$0.51 & \first{16.40$\pm$0.21} & \first{74.7} \\
& SMC             & 172.62$\pm$198.39 & \third{67.74$\pm$0.82} & \third{2.49$\pm$2.23} & 9.73$\pm$6.95 & \third{70.0} \\
& SMC-grad        & \second{128.02$\pm$124.46} & \second{67.86$\pm$0.87} & \second{1.43$\pm$0.50} & \third{13.16$\pm$0.59} & \second{72.7} \\
& DeepSeek-V3.2   & 193.73$\pm$94.54 & 67.62$\pm$1.45 & 5.11$\pm$0.91 & 2.87$\pm$0.83 & 65.3 \\
& GPT-4-turbo     & 337.70$\pm$168.25 & 66.56$\pm$0.96 & \first{1.31$\pm$0.51} & \second{13.42$\pm$5.75} & 64.0 \\
& Claude-3-Haiku  & 333.56$\pm$82.36 & 67.59$\pm$1.76 & 3.05$\pm$1.72 & 6.90$\pm$5.23 & 62.0 \\
& Gemini-2.5-flash& \third{140.21$\pm$123.65} & 66.49$\pm$1.92 & 4.46$\pm$0.58 & 3.02$\pm$0.81 & 66.7 \\
& Grok-4          & \worst{620.79$\pm$266.21} & 66.49$\pm$1.92 & 4.67$\pm$0.62 & 10.20$\pm$0.76 & 67.7 \\
& Qwen-plus       & 430.68$\pm$214.13 & \first{68.22$\pm$1.45} & \worst{5.32$\pm$0.54} & \worst{2.46$\pm$1.14} & 61.0 \\
& Convex Optimization     & 267.58$\pm$85.57 & \worst{60.67$\pm$7.99} & 5.24$\pm$3.08 & 3.16$\pm$0.59 & \worst{50.7} \\
\midrule

\multirow{10}{*}{8}
& Diffusion       & \first{102.26$\pm$71.57} & \second{74.52$\pm$1.24} & \first{2.73$\pm$0.23} & \first{13.25$\pm$2.75} & \first{68.0} \\
& SMC             & 346.75$\pm$352.49 & 73.31$\pm$0.70 & \second{3.80$\pm$0.63} & 3.89$\pm$2.13 & \third{63.3} \\
& SMC-grad        & 331.34$\pm$222.24 & 73.85$\pm$0.67 & 4.24$\pm$1.87 & \third{5.56$\pm$2.24} & \second{66.0} \\
& DeepSeek-V3.2   & 262.15$\pm$146.74 & 72.49$\pm$1.51 & \worst{6.54$\pm$0.23} & 3.68$\pm$1.23 & 58.7 \\
& GPT-4-turbo     & 243.29$\pm$158.90 & 72.41$\pm$1.67 & \third{3.84$\pm$1.80} & \second{12.95$\pm$8.77} & 57.3 \\
& Claude-3-Haiku  & \worst{689.94$\pm$331.97} & 72.90$\pm$3.17 & 4.90$\pm$0.53 & 4.55$\pm$0.20 & 55.7 \\
& Gemini-2.5-flash& \third{211.82$\pm$40.30} & 72.70$\pm$1.85 & 5.68$\pm$0.45 & 2.45$\pm$2.00 & 60.0 \\
& Grok-4          & 348.45$\pm$228.02 & \third{74.52$\pm$2.36} & 5.93$\pm$0.78 & 3.36$\pm$0.34 & 61.0 \\
& Qwen-plus       & \second{130.89$\pm$109.30} & \first{75.67$\pm$7.69} & 5.46$\pm$1.13 & 3.84$\pm$0.82 & 54.3 \\
& Convex Optimization     & 346.69$\pm$189.97 & \worst{68.62$\pm$3.90} & \worst{6.54$\pm$4.08} & \worst{2.20$\pm$0.10} & \worst{44.7} \\
\bottomrule
\end{tabular}
}
\end{table*}

\emph{2. Do LLMs possess sufficient domain knowledge to perform effective planning, and what are the limits of their planning capabilities?} \textbf{YES}, agents based on LLMs largely possess sufficient domain knowledge to act as effective high-level planners for AP deployment (see Tables~\ref{tab:b14_apscale}, \ref{tab:b18_apscale}, \ref{tab:b2_apscale} and \ref{tab:b5_apscale}). As generalist foundation models, they encode substantial prior knowledge about wireless propagation, spatial layouts, and
deployment heuristics, enabling them to propose reasonable candidate AP placements across diverse indoor environments. 
However, their planning capability is fundamentally limited by the need to rely on a verifier. As illustrated in Figure~\ref{fig:agentic_flow}, LLM-based planning operates within an agentic loop that repeatedly queries a simulator to  evaluate coverage, interference, and throughput. Since the LLM itself cannot accurately internalize fine-grained physical signal propagation, effective planning requires frequent verifier calls for validation and refinement. This tight dependency leads to substantial inference-time overhead, as each iteration incurs significant computational cost from physical simulation, resulting in slow and resource-intensive planning despite the strong reasoning abilities of the LLM.

\emph{3. Why does diffusion sampling outperform other generative sampling and refinement methods in this setting?} Diffusion sampling outperforms alternative generative refinement methods not due to iterative refinement itself, but because of the implicit smoothing induced by the diffusion process. While weighted sampling with Gaussian proposals or Langevin sampling also perform local refinement guided by the reward, they operate directly on the original reward-induced landscape, which is highly non-convex and fragmented. Consequently, their sampling trajectories are sensitive to local irregularities and easily become trapped in suboptimal modes shown in Figure~\ref{fig:AP_iternative_refinment}. In contrast, diffusion models under VE SDEs apply a Gaussian mollifier to the reward-induced score (see \eqref{eq:derivation_score_function} and Figure~\ref{fig:reward-guided_3d_traj}), progressively smoothing the landscape. This coarse-to-fine sampling process enables more effective exploration of global structure and facilitates convergence toward higher-quality, near-global optima (see Figure~\ref{fig:AP_iternative_refinment}, Tables~ \ref{tab:b14_apscale}, \ref{tab:b18_apscale}, \ref{tab:b2_apscale} and \ref{tab:b5_apscale}). Also, the scaling results are in Figure~\ref{fig:diffusion_scaling}.

\emph{4. What are the main bottlenecks of diffusion sampling for AP deployment tasks?} The primary bottleneck of diffusion sampling in AP deployment lies in the reward function. While a learned general reward exhibits a degree of out-distribution generalization (see Table~\ref{tab:diffusion_rewardnet}), performance degrades significantly when applied to deployment scenarios that differ substantially from the training data. This degradation can be attributed to inaccuracies in the reward approximation. Specifically, if the reward estimation error satisfies $| r(\mathbf{P}, \mathcal{D}) - \hat{r}(\mathbf{P}, \mathcal{D}) | \leq \varepsilon$ in Theorem~\ref{theorem:reward_bottleneck}, the induced error in the score function admits an upper bound of $\mathcal{O}(\beta^{4}\varepsilon^{2})$, and the discrepancy with target distribution is amplified during sampling and leads to suboptimal deployment solutions. However, we observe that incorporating few-shot samples from the target domain substantially improves reward accuracy and restores performance. As the number of samples increases, scaling the reward model leads to consistently higher-quality diffusion-based deployment solutions.

\textbf{Ablation Studies.} We study the influence of the Monte Carlo sample number on diffusion sampling to validate our theoretical analysis. As shown in Table~\ref{tab:diffusion_particles}, increasing the number of particles consistently improves deployment quality across coverage, IOR, TQS, and success rate, while reducing performance variance. This behavior aligns with the convergence rate of the score estimator, which scales as $\mathcal{O}(1/(|\Pi|K))$, where $K$ denotes the number of Monte Carlo samples and $|\Pi|$ accounts for permutation averaging. More accurate score estimates enable the diffusion process to follow a smoother and more reliable optimization direction, thereby requiring fewer diffusion iterations (with less time) to reach the target coverage level. More ablation on temperature shown in Apendix~\ref{append:extra_ablation}.  

\begin{table*}[ht]
\centering
\caption{Performance comparison under different AP deployment scales in Building Level 4. The simulation results are reported as (mean$\pm$std) with color coding indicating performance rankings: the \textit{top three} methods in each metric with different number of APs are highlighted in \colorbox{green!25}{(1st)}, \colorbox{blue!15}{(2nd)}, and \colorbox{gray!20}{(3rd)}. The \worst{worst} performance in each column is underlined. Due to the space limitation, Level 1 (Figure~\ref{tab:b18_apscale}), Level 2 (Figure~\ref{tab:b2_apscale}) and Level 3 (Figure~\ref{tab:b5_apscale}) are presented in Appendix~\ref{append:Simulation Results for different levels}.}
\label{tab:b14_apscale}
\setlength{\tabcolsep}{8pt}
\small
\begin{tabular}{ccccccc}
\toprule
\textbf{APs} & \textbf{Method} 
& \textbf{Runtime (s)$\downarrow$} 
& \textbf{Coverage (\%)$\uparrow$} 
& \textbf{IOR$\downarrow$} 
& \textbf{TQS$\uparrow$} 
& \textbf{Success (\%)$\uparrow$} \\
\midrule

\multirow{10}{*}{3}
& Diffusion       & \first{31.66±27.47} & \third{58.98±0.64} & \third{0.45±1.69} & 9.31±4.27 & \first{82.0} \\
& Weighted sampling (Gaussian)            & \third{71.37±52.31} & 56.91±0.47 & 0.97±1.32 & 6.57±3.66 & \third{78.0} \\
& Weight sampling (Langevin)         & \second{62.01±70.45} & 57.15±0.47 & \second{0.26±0.37} & \first{10.97±3.35} & \second{80.7} \\
& DeepSeek-V3.2    & 195.43±15.33 & 57.35±0.20 & 1.15±1.14 & 8.23±3.65 & 74.0 \\
& GPT-4-turbo      & 597.20±619.10 & 56.92±1.50 & 0.96±1.18 & 9.91±4.41 & 72.7 \\
& Claude-3-Haiku   & 797.81±708.08 & 58.00±6.45 & 1.04±1.64 & \third{9.93±5.64} & 70.7 \\
& Gemini-2.5-flash & 225.15±201.89 & \second{61.69±10.56} & 1.45±2.14 & 7.45±5.83 & 75.3 \\
& Grok-4           & 284.17±220.43 & 57.51±0.37 & \first{0.01±0.23} & \second{10.87±0.06} & 76.0 \\
& Qwen-plus        & \worst{1142.55±903.67} & \first{61.89±12.33} & 1.23±2.13 & 6.57±3.66 & 69.3 \\
& Convex Optimization      & 205.54±26.13 & \worst{44.54±9.09} & \worst{1.54±2.17} & \worst{3.93±2.20} & \worst{58.0} \\
\midrule

\multirow{10}{*}{6}
& Diffusion       & \first{39.91±12.71} & 67.43±1.03 & 4.11±0.51 & \first{16.40±0.21} & \first{74.7} \\
& Weighted sampling (Gaussian)             & 172.62±198.39 & \third{67.74±0.82} & \third{2.49±2.23} & 9.73±6.95 & \third{70.0} \\
& Weight sampling (Langevin)         & \second{128.02±124.46} & \second{67.86±0.87} & \second{1.43±0.50} & \third{13.16±0.59} & \second{72.7} \\
& DeepSeek-V3.2    & 193.73±94.54 & 67.62±1.45 & 5.11±0.91 & 2.87±0.83 & 65.3 \\
& GPT-4-turbo      & 337.70±168.25 & 66.56±0.96 & \first{1.31±0.51} & \second{13.42±5.75} & 64.0 \\
& Claude-3-Haiku   & 333.56±82.36 & 67.59±1.76 & 3.05±1.72 & 6.90±5.23 & 62.0 \\
& Gemini-2.5-flash & \third{140.21±123.65} & 66.49±1.92 & 4.46±0.58 & 3.02±0.81 & 66.7 \\
& Grok-4           & \worst{620.79±266.21} & 66.49±1.92 & 4.67±0.62 & 10.20±0.76 & 67.7 \\
& Qwen-plus        & 430.68±214.13 & \first{68.22±1.45} & \worst{5.32±0.54} & \worst{2.46±1.14} & 61.0 \\
& Convex Optimization      & 267.58±85.57 & \worst{60.67±7.99} & 5.24±3.08 & 3.16±0.59 & \worst{50.7} \\
\midrule

\multirow{10}{*}{8}
& Diffusion       & \first{102.26±71.57} & \second{74.52±1.24} & \first{2.73±0.23} & \first{13.25±2.75} & \first{68.0} \\
& Weighted sampling (Gaussian)             & 346.75±352.49 & 73.31±0.70 & \second{3.80±0.63} & 3.89±2.13 & \third{63.3} \\
& Weight sampling (Langevin)         & 331.34±222.24 & 73.85±0.67 & 4.24±1.87 & \third{5.56±2.24} & \second{66.0} \\
& DeepSeek-V3.2    & 262.15±146.74 & 72.49±1.51 & \worst{6.54±0.23} & 3.68±1.23 & 58.7 \\
& GPT-4-turbo      & 243.29±158.90 & 72.41±1.67 & \third{3.84±1.80} & \second{12.95±8.77} & 57.3 \\
& Claude-3-Haiku   & \worst{689.94±331.97} & 72.90±3.17 & 4.90±0.53 & 4.55±0.20 & 55.7 \\
& Gemini-2.5-flash & \third{211.82±40.30} & 72.70±1.85 & 5.68±0.45 & 2.45±2.00 & 60.0 \\
& Grok-4           & 348.45±228.02 & \third{74.52±2.36} & 5.93±0.78 & 3.36±0.34 & 61.0 \\
& Qwen-plus        & \second{130.89±109.30} & \first{75.67±7.69} & 5.46±1.13 & 3.84±0.82 & 54.3 \\
& Convex Optimization      & 346.69±189.97 & \worst{68.62±3.90} & \worst{6.54±4.08} & \worst{2.20±0.10} & \worst{44.7} \\
\bottomrule
\end{tabular}
\end{table*}

\vspace{-4pt}

\begin{table}[ht]
\centering
\footnotesize
\setlength{\tabcolsep}{2pt}
\renewcommand{\arraystretch}{0.99}
\caption{Diffusion performance under different reward network training regimes. The reward network predicts coverage and provides guidance during diffusion sampling. Runtime (RT), coverage (Cov.), IOR, TQS and success rate (SR) are reported as (mean $\pm$ std).}
\label{tab:diffusion_rewardnet}
\begin{tabular}{@{}lccccc@{}}
\toprule
\textbf{Reward}
& \textbf{RT (s)}
& \textbf{Cov. (\%)}
& \textbf{IOR}
& \textbf{TQS}
& \textbf{SR (\%)} \\
\midrule
In-distribution
& $30.3{\pm}11.6$
& $92.5{\pm}3.2$
& $0.9{\pm}0.1$
& $51.1{\pm}0.2$
& 92 \\
Few-shot
& $172.5{\pm}71.2$
& $68.4{\pm}0.8$
& $1.9{\pm}1.3$
& $39.2{\pm}1.4$
& 74 \\
Zero-shot
& $352.6{\pm}54.8$
& $36.7{\pm}4.9$
& $4.9{\pm}2.1$
& $30.5{\pm}3.2$
& 31 \\
\bottomrule
\end{tabular}
\end{table}

\begin{table}[ht]
\centering
\footnotesize
\setlength{\tabcolsep}{2pt}
\renewcommand{\arraystretch}{0.99}
\caption{Effect of particle number on diffusion-based optimization performance.
Runtime (RT), coverage (Cov.), IOR, TQS and success rate (SR) are reported as (mean $\pm$ std).}
\label{tab:diffusion_particles}
\begin{tabular}{@{}cccccc@{}}
\toprule
\textbf{\# Particles}
& \textbf{RT (s)}
& \textbf{Cov. (\%)}
& \textbf{IOR}
& \textbf{TQS}
& \textbf{SR (\%)} \\
\midrule
5
& $163.3{\pm}11.9$
& $70.7{\pm}3.3$
& $1.1{\pm}0.6$
& $35.9{\pm}2.1$
& 89 \\
10
& $60.8{\pm}11.0$
& $71.4{\pm}3.6$
& $1.4{\pm}0.2$
& $34.5{\pm}3.0$
& 93 \\
20
& $43.3{\pm}14.2$
& $75.2{\pm}2.4$
& $1.8{\pm}0.3$
& $36.8{\pm}2.7$
& 98 \\
\bottomrule
\end{tabular}
\end{table}



\section{Conclusion and Limitations}
We studied indoor AP deployment through the lens of modern AI and systematically benchmarking the tasks over a wide range of models. By reframing AP planning as a generative inference problem, our work opens a new research direction for scalable and domain-agnostic wireless network design. Our results show that diffusion sampling consistently outperforms alternatives in highly non-convex indoor environments. These gains stem from the diffusion process smoothing the reward-induced score landscape, enabling effective sampling over fragmented deployment spaces. By introducing a large-scale real-world dataset, we further demonstrate that a unified reward function can generalize across diverse indoor layouts, highlighting diffusion-based inference as a scalable foundation for indoor AP planning.

\textbf{Limitations. }Our approach depends on the quality of the reward function and currently focuses on static indoor environments. Future work may explore richer reward formulations, outdoor environments, dynamic deployment scenarios for broader wireless planning tasks.

\section*{Impact Statement}
This paper presents work whose goal is to advance the intersection field of AI and wireless communication. There are many potential societal consequences of our work, none which we feel must be specifically highlighted here.

\bibliography{example_paper}
\bibliographystyle{icml2026}

\newpage
\appendix
\onecolumn
\section{You \emph{can} have an appendix here.}
\section*{Notation}

\begin{center}
\begin{tabular}{cc}
\toprule 
Notations & Meaning   \\ 
\midrule 
$B$ & system bandwidth \\ 
$\mathcal{D}$ & bounded region \\ 
$I$ & co-channel interference \\ 
$N$ & number of APs \\ 
$\mathbf{P}$ & deployment configuration \\
$R$ & coverage function \\
$T$ & service feasibility \\
$T_{\text{min}}$ & minimal requirement of service \\
$\mathcal{Z}$ & partition function \\
$W$ & Brownian motion \\ 
$p$ & probability \\ 
$q$ &  proposal distribution \\ 
$r$ &  reward function \\ 
$\mathbf{p}_n$ & the location of the $n-$th AP \\
$\beta$ & temperature factor \\ 
$\sigma_t$ & noise scale at diffusion time step $t$ \\ 
$\omega$ & importance weight \\ 
$\pi$ & permutation group action \\ 
$\xi$ & co-channel interference threshold \\ 
$\Pi$ & permutation invariance group \\ 

\bottomrule 
\end{tabular}
\end{center}

\clearpage
\section{Quick Guidance for Appendix}

This appendix provides a structured and self-contained supplement to the main paper, consolidating the technical foundations, theoretical analysis, algorithmic details, data generation protocol, and extended experimental results. Its organization is designed to guide readers progressively from modeling assumptions to theoretical guarantees and empirical validation.

\textbf{Foundational Definitions.}
Section~\ref{append:diffusion} introduces the diffusion-based generative framework adopted in this work, including the forward variance-exploding SDE and the corresponding reverse-time dynamics.
Section~\ref{append:permutation_group} formalizes the permutation invariance induced by indistinguishable access point (AP) identities, which plays a central role in score estimation, variance reduction, and sample efficiency.

\textbf{Theoretical Analysis.}
Theoretical guarantees are presented next.
Lemma~\ref{lemma:score_concentration} establishes a concentration bound for Monte Carlo score estimation, showing an improved $\mathcal{O}(1/\sqrt{|\Pi|K})$ convergence rate enabled by permutation symmetrization.
Theorems~\ref{theorem:reward_bottleneck} and~\ref{thm:reward_score_gap} further characterize how reward approximation errors propagate through the score function and affect the final sampling distribution, identifying reward estimation as the dominant source of error in reward-guided diffusion.

\textbf{Reward Design and Optimization.}
Section~\ref{append:reward_design} details the differentiable, geometry-aware reward construction, including floor-plan encoding, location-conditioned feature querying, and permutation-invariant aggregation.
This section connects the theoretical analysis with practical optimization by enabling both gradient-based methods and diffusion-based sampling.

\textbf{Algorithms and Implementation Details.}
Algorithmic procedures are specified in Section~\ref{append:code_and_implementation}, which provides pseudo-code for weighted sampling, gradient-accelerated (Langevin) sampling, and diffusion-based optimization.
This section clarifies how the theoretical score formulations are instantiated in practice and how permutation invariance is enforced during sampling.

\textbf{Data Generation and Benchmark Construction.}
Section~\ref{append:data_generation} describes the data generation pipeline used to construct the large-scale indoor deployment benchmark.
It details the use of a physics-based 3D ray-tracing simulator, the building layout corpus, wireless configuration, and supervision signals, ensuring that training and evaluation are grounded in realistic propagation and interference conditions.

\textbf{Extended Experimental Results and Visualization.}
Section~\ref{append:more_experimental_result} reports extended simulation results across multiple building complexity levels, together with ablation studies and qualitative visualizations.
The intermediate reasoning and iterative optimization trajectories illustrate how different baselines evolve during optimization and how diffusion-based planning progressively refines AP deployments in complex indoor environments.

Overall, the appendix is organized to move systematically from definitions, to theory, to algorithms, to data, and finally to empirical evidence. It complements the main paper by providing depth and rigor without introducing new assumptions or altering the core problem formulation.

\clearpage
\section{Key Technical Definitions}

\subsection{Diffusion-Based Generative Model} \label{append:diffusion}

Diffusion models provide a powerful framework for approximating complex, high-dimensional probability distributions. In this work, we adopt the continuous-time variance exploding (VE) diffusion formulation \citep{akhound2024iterated}. The forward diffusion process gradually perturbs a data sample $\mathbf{P}_0$ into Gaussian noise over the time horizon $t \in [0, T]$, and is governed by the following stochastic differential equation (SDE):
\begin{equation}
d\mathbf{P}_t = \sqrt{\frac{d \sigma_t^2}{dt}} \, dW_t,
\end{equation}
where $\sigma_t$ denotes a predefined noise schedule and $W_t$ is a standard Brownian motion.

A key property of diffusion models is that the forward process admits a well-defined reverse-time dynamics. Specifically, the reverse-time SDE that transforms noise back into data samples is given by
\begin{align}
\label{eq:reverse_SDE}
d\mathbf{P}_t = \left[ - \frac{d \sigma_t^2}{dt} \nabla_{\mathbf{P}} \log p(\mathbf{P}_t|\mathcal{D}) \right] dt
+ \sqrt{\frac{d \sigma_t^2}{dt}} \, dW_t .
\end{align}
Solving the reverse SDE requires access to the score function $\nabla_{\mathbf{P}} \log p(\mathbf{P}_t)$, which represents the gradient of the log-density of the perturbed data distribution. In standard diffusion models, this score function is learned via neural network training and estimated through Monte Carlo sampling, as shown in \eqref{eq:score_function}.

In our setting, however, the score function can be obtained analytically without additional training once the reward function is known. This enables us to perform diffusion-based sampling directly by simulating the reverse SDE, starting from an initial noise sample $\mathbf{P}_T \sim \mathcal{N}(\boldsymbol{0}, \boldsymbol{I})$.

\subsection{Permutation Group and Invariance} \label{append:permutation_group}

Consider a deployment of $N$ access points (APs) represented by the unordered set
$\mathbf{P} = \{\mathbf{p}_1, \dots, \mathbf{p}_N\}$, where each $\mathbf{p}_i \in \mathbb{R}^3$
denotes the spatial location of the $i$-th AP. Since APs are indistinguishable, any reordering of their indices corresponds to the same physical deployment.
Formally, let $\Pi$ denote the symmetric group of degree $N$, consisting of all permutations over the index set $\{1,\dots,N\}$.
For any permutation $\pi \in \Pi$, the induced group action on the deployment $\mathbf{P}$ is defined as
\[
\pi(\mathbf{P}) = \{\mathbf{p}_{\pi(1)}, \dots, \mathbf{p}_{\pi(N)}\}.
\]

A function $f(\mathbf{P}, \mathcal{D})$ defined on AP deployments and environmental context $\mathcal{D}$
is said to be \emph{permutation invariant} if it remains unchanged under any such permutation, i.e.,
\[
f(\mathbf{P}, \mathcal{D}) = f(\pi(\mathbf{P}), \mathcal{D}), \quad \forall\, \pi \in \Pi.
\]
In particular, the reward function $r(\mathbf{P}, \mathcal{D})$ considered in this work satisfies permutation invariance,
reflecting the indistinguishability of AP identities.

This invariance induces an equivalence class structure over the space of deployments, where all permutations of a given configuration correspond to the same physical state and yield identical rewards.
Exploiting this symmetry allows us to treat permuted deployments as equivalent samples, thereby improving sample efficiency and reducing variance in both the normalization constant estimation in \eqref{eq:normalization}
and the score function estimation in \eqref{eq:score_function_estimation}.

\clearpage
\section{Theoretical Analysis}

\textbf{Derivation of Exact Score Function.}

\begin{equation}\label{eq:derivation_score_function}
\begin{aligned}
\nabla_{\mathbf{P}} \log p_t (\mathbf{P}_t|\mathcal{D})
&= \frac{\big((\nabla_{\mathbf{P}} p_{0}) * \mathcal{N}(0, \sigma_t^2I) \big)(\mathbf{P}_t|\mathcal{D})}{p_t (\mathbf{P}_t|\mathcal{D})} \\
&= \frac{\mathbb{E}_{\mathbf{P}_{0|t} \sim \mathcal{N}(\mathbf{P}_t, \sigma_t^2I)}[\nabla_{\mathbf{P}} p_{0}(\mathbf{P}_{0|t})]}
        {\mathbb{E}_{\mathbf{P}_{0|t} \sim \mathcal{N}(\mathbf{P}_t, \sigma_t^2I)}[p_{0}(\mathbf{P}_{0|t})]} \\
&= \frac{\mathbb{E}_{\mathbf{P}_{0|t} \sim \mathcal{N}(\mathbf{P}_t, \sigma_t^2I)}\!\left[\frac{\nabla_{\mathbf{P}} e^{\beta r(\mathbf{P}_{0|t}, \mathcal{D})}}{\cancel{\mathcal{Z}}}\right]}
        {\mathbb{E}_{\mathbf{P}_{0|t} \sim \mathcal{N}(\mathbf{P}_t, \sigma_t^2I)}\!\left[\frac{e^{\beta r(\mathbf{P}_{0|t}, \mathcal{D})}}{\cancel{\mathcal{Z}}}\right]} \\
&= \frac{\mathbb{E}_{\mathbf{P}_{0|t} \sim \mathcal{N}(\mathbf{P}_t, \sigma_t^2I)}[\nabla_{\mathbf{P}} e^{\beta r(\mathbf{P}_{0|t}, \mathcal{D})}]}
        {\mathbb{E}_{\mathbf{P}_{0|t} \sim \mathcal{N}(\mathbf{P}_t, \sigma_t^2I)}[e^{\beta r(\mathbf{P}_{0|t}, \mathcal{D})}]}\, .
\end{aligned}
\end{equation}

Here, $*$ represents convolution with an isotropic Gaussian mollifier corresponding to the variance-exploding noise with variance $\sigma_t^2I$. The first line follows directly from fundamental properties of convolution operators~\cite{brezis2011functional}, while the remaining steps are derived using standard probabilistic representations. \textit{Under the VE SDE, the Gaussian mollifier naturally induces a coarse-to-fine structure over the reward landscape: at large noise levels, convolution with a wide Gaussian smooths out local irregularities and fragmented modes for computing the score function, while progressively smaller noise sharpens the landscape and recovers fine-grained structure. This multiscale smoothing effect enables diffusion sampling to effectively navigate highly non-convex and fragmented objectives (as shown in Figure~\ref{fig:reward-guided_3d_traj}). } 

\begin{figure*}[ht]
    \centering
    \includegraphics[width=0.88\linewidth]{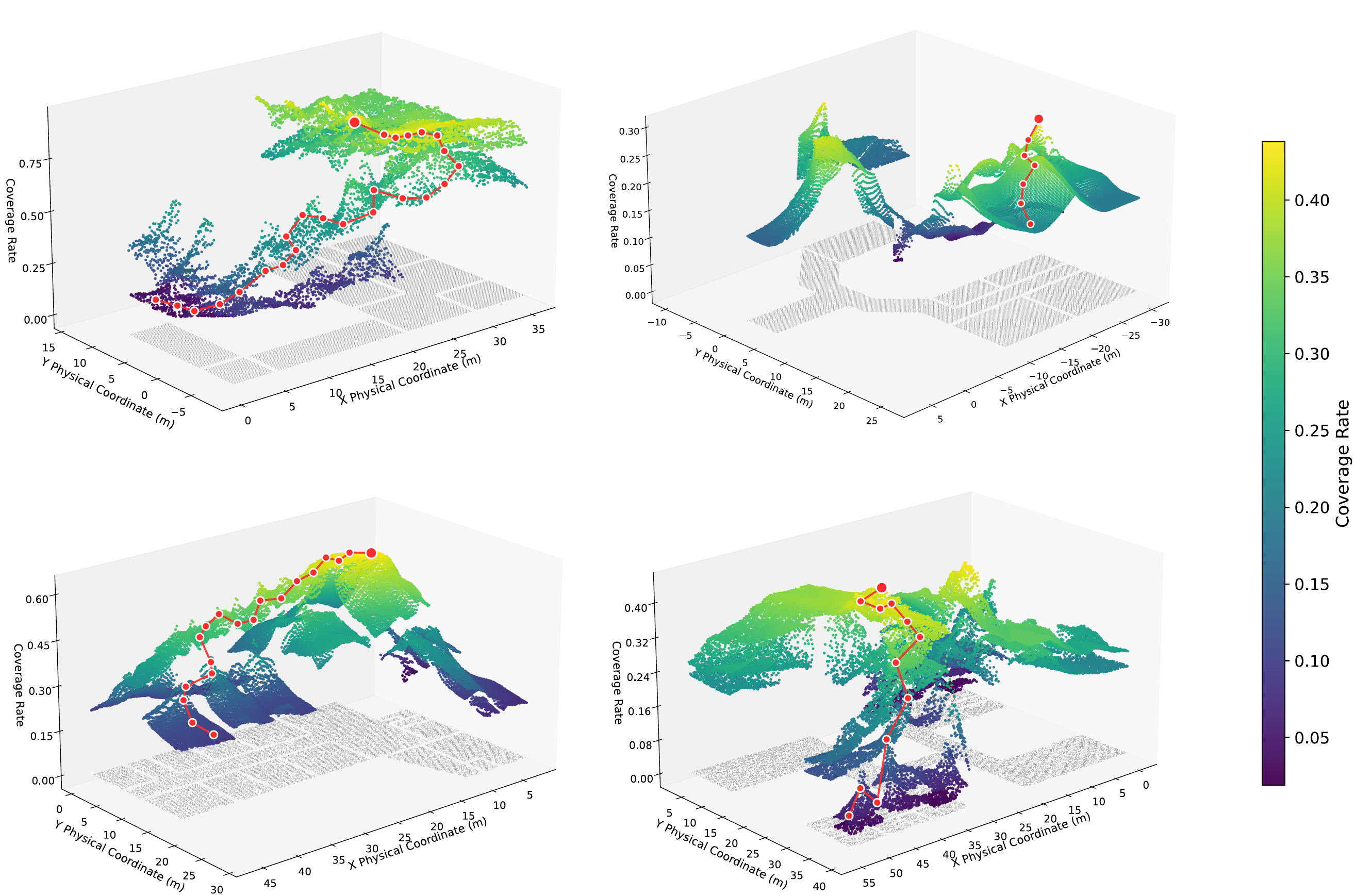}
    \caption{Visualization of reward-guided diffusion trajectories. The shaded regions on the ground plane depict the indoor floor plan, while the colored surfaces represent the reward landscape over AP deployment space. The red trajectories illustrate reverse-time SDE sampling paths guided by the reward-induced score. Due to Gaussian mollification under the VE SDE, the score function computed with a coarse-to-fine structure: large noise levels smooth non-convex and fragmented reward regions, enabling globally coherent motion, while progressively smaller noise refines the trajectory toward high-reward optima. In contrast to convex optimization in Figure~\ref{fig:AP_iternative_refinment}, which is prone to local traps, the injected VE noise allows diffusion sampling to overcome disconnected modes and navigate complex reward landscapes effectively.}
    \label{fig:reward-guided_3d_traj}
\end{figure*}

\begin{lemma}[Girsanov Theorem for Drift Perturbation \citep{shiryaev2016probability}] \label{lemma:girsanov}
Let $\{\mathbf{P}_t\}_{t\in[0,T]}$ be a stochastic process governed by the reverse-time SDE
\[
d\mathbf{P}_t = b(\mathbf{P}_t, t)\,dt + \eta_t\,dW_t,
\]
and let $\{\hat{\mathbf{P}}_t\}_{t\in[0,T]}$ follow
\[
d\hat{\mathbf{P}}_t = \hat{b}(\hat{\mathbf{P}}_t, t)\,dt + \eta_t\,dW_t,
\]
where both processes share the same diffusion coefficient $\eta_t$ and initial distribution.
Assume that the Novikov condition holds for the drift difference.
Then, the Kullback–Leibler divergence between the induced path measures satisfies
\[
D_{\mathrm{KL}}\big(p_{\hat{b}} \,\|\, p_{b}\big)
=
\frac{1}{2}
\mathbb{E}_{p_{\hat{b}}}
\left[
\int_0^T
\left\|
\eta_t^{-1}
\big(
b(\mathbf{P}_t,t)-\hat{b}(\mathbf{P}_t,t)
\big)
\right\|^2
dt
\right].
\]
\end{lemma}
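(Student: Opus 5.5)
The plan is to reduce the identity to the classical Girsanov change of measure on path space and then take expectations of the log-likelihood ratio.

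\textbf{Step 1 (set-up).} First reparametrize time so that the reverse-time SDE becomes an ordinary forward It\^o SDE on $[0,T]$. Setting $s=T-t$ leaves the quadratic variation structure unchanged, and $\eta$ remains a deterministic scalar, nonvanishing coefficient. Then work on the canonical path space $C([0,T];\mathbb{R}^{3N})$ with coordinate process $\mathbf{P}_t$. Write $p_b$ and $p_{\hat b}$ for the laws of the two solutions. Both start from the same initial distribution $\mu$, so the chain rule for relative entropy gives
\[
D_{\mathrm{KL}}(p_{\hat b}\|p_b)=D_{\mathrm{KL}}(\mu\|\mu)+\mathbb{E}_{\mu}\big[D_{\mathrm{KL}}(p_{\hat b}^{x}\|p_b^{x})\big].
\]
The first term vanishes, so it suffices to treat a fixed initial point $x$.

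\textbf{Step 2 (density).} Define $u_t=\eta_t^{-1}\big(\hat b(\mathbf{P}_t,t)-b(\mathbf{P}_t,t)\big)$. By the Novikov condition, the stochastic exponential
\[
\mathcal{E}_t=\exp\Big(\int_0^t u_s\cdot dW_s-\tfrac12\int_0^t\|u_s\|^2ds\Big)
\]
is a true martingale under the law of the $b$-process. Girsanov's theorem then says that under $d\mathbb{Q}=\mathcal{E}_T\,dp_b$, the process $\tilde W_t=W_t-\int_0^t u_s\,ds$ is a Brownian motion, and $\mathbf{P}$ solves the $\hat b$-SDE.

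Weak uniqueness for the $\hat b$-equation, which I will assume together with well-posedness of both SDEs, identifies $\mathbb{Q}=p_{\hat b}$. Rewriting $\mathcal{E}_T$ in terms of $\tilde W$ gives
\[
\log\frac{dp_{\hat b}}{dp_b}=\int_0^T u_t\cdot d\tilde W_t+\tfrac12\int_0^T\|u_t\|^2dt .
\]

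\textbf{Step 3 (expectation).} Take $\mathbb{E}_{p_{\hat b}}$ of this log-density. The integral $\int u\cdot d\tilde W$ is a $p_{\hat b}$-local martingale, and its expectation must be shown to vanish. Once it does, what remains is $\frac12\mathbb{E}_{p_{\hat b}}\int_0^T\|\eta_t^{-1}(b-\hat b)\|^2dt$, which is exactly the claimed formula.

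I expect this step to be the main obstacle, because Novikov controls the exponential martingale under $p_b$, not integrability of $u$ under $p_{\hat b}$. My first approach is localization:
\begin{itemize}
\item Let $\tau_n=\inf\{t:\int_0^t\|u_s\|^2ds\ge n\}\wedge T$. On $[0,\tau_n]$ the stochastic integral is an $L^2$ martingale with zero mean.
\item This gives the identity for the laws stopped at $\tau_n$.
\item Let $n\to\infty$. The right side converges by monotone convergence. The left side converges because relative entropy of the stopped laws increases to the entropy on the full $\sigma$-algebra, by lower semicontinuity and the data-processing inequality.
\end{itemize}
This handles both the finite and the infinite case, and then integrating over $x\sim\mu$ completes the argument.
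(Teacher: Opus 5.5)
Your argument is correct, but there is nothing in the paper to compare it against step by step. The paper does not prove this lemma; it states it as a consequence of Girsanov's theorem with a citation and then applies it in the proof of the main error bound. Your proposal supplies the standard derivation behind that citation. You identify $dp_{\hat b}/dp_b$ with $\mathcal{E}_T$ via Girsanov and uniqueness in law, rewrite $\log\mathcal{E}_T=\int_0^T u_t\cdot d\tilde W_t+\frac12\int_0^T\|u_t\|^2dt$ with $\tilde W$ a $p_{\hat b}$-Brownian motion, and take $p_{\hat b}$-expectations. In doing so you make explicit what the citation leaves implicit: invertibility of $\eta_t$, uniqueness in law of both equations (needed for the path measures to be well defined at all), and the fact that the stochastic integral's expectation must be argued rather than assumed.

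Your localization in Step~3 is sound. One point is worth writing out. Novikov gives $\int_0^T\|u_t\|^2dt<\infty$ $p_b$-a.s., hence also $p_{\hat b}$-a.s. by absolute continuity, so $\tau_n=T$ eventually under \emph{both} laws. This is what makes the stopped laws converge in both arguments, which lower semicontinuity requires.

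Note also that the obstacle you identify comes from reading the Novikov hypothesis under $p_b$, where the right-hand side is not a priori finite. If it is imposed under $p_{\hat b}$ instead, you can build $p_b$ from $p_{\hat b}$ with density $\mathcal{E}\bigl(-\int u\cdot d\hat W\bigr)_T$, where $\hat W$ drives the $\hat b$-equation. Then $x\le 2e^{x/2}$ gives $\mathbb{E}_{p_{\hat b}}\int_0^T\|u_t\|^2dt<\infty$, the stochastic integral is an $L^2$ martingale, and no localization is needed. The lemma's wording allows either reading. Yours costs a limiting argument but also covers the case where both sides are $+\infty$.
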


\begin{lemma}
\label{lemma:score_concentration}
Assume that $e^{\beta r(\mathbf{P}, \mathcal{D})}$ and
$\|\nabla_{\mathbf{P}} e^{\beta r(\mathbf{P}, \mathcal{D})}\|$
are sub-Gaussian random variables and $\mathbf{P}$ is permutation invariant.
Then there exists a constant $c(\mathbf{P}_t, \mathcal{D})$ such that,
for any $\delta \in (0,1)$, with probability at least $1-\delta$,
\begin{equation}
\label{eq:score_concentration}
\left\|
\widehat{\nabla_{\mathbf{P}} \log p_t}(\mathbf{P}_t \mid \mathcal{D})
-
\nabla_{\mathbf{P}} \log p_t (\mathbf{P}_t \mid \mathcal{D})
\right\|
\leq
\frac{
c(\mathbf{P}_t, \mathcal{D}) \log(1/\delta)
}{
\sqrt{|\Pi|K}
}.
\end{equation}
Here, $\widehat{\nabla_{\mathbf{P}} \log p_t}(\mathbf{P}_t \mid \mathcal{D})$ is estimated as shown in \eqref{eq:score_function_estimation}, and $K$ is the number in Monte Carlo estimation. 
\end{lemma}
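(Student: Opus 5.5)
The estimator in \eqref{eq:score_function_estimation} (with permutation averaging as in Algorithm~\ref{alg:reverse_diffusion}) is a ratio of two empirical means, and the plan is to treat it as such. Fix $\mathbf{P}_t$ and $\mathcal{D}$, and write
\[
A := \mathbb{E}\big[\nabla_{\mathbf{P}} e^{\beta r(\mathbf{P}_{0|t},\mathcal{D})}\big], \qquad B := \mathbb{E}\big[e^{\beta r(\mathbf{P}_{0|t},\mathcal{D})}\big] > 0,
\]
so that the true score is $A/B$ by \eqref{eq:score_function}. Let $\hat A$ and $\hat B$ be the corresponding averages over the $K$ Gaussian draws and the $|\Pi|$ permuted copies of each draw, so the estimator is $\hat A/\hat B$. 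The error then splits by the elementary identity
\[
\frac{\hat A}{\hat B}-\frac{A}{B}=\frac{\hat A - A}{\hat B}-\frac{A}{B}\cdot\frac{\hat B - B}{\hat B}.
\]
This identity reduces the lemma to three ingredients: a concentration bound for the numerator, a concentration bound for the denominator, and a lower bound on $\hat B$.

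\textbf{Step 1: effective sample size.} I would establish that the symmetrized averages behave like averages of $|\Pi|K$ independent sub-Gaussian terms. The permutation action $\pi$ is an isometry, and $\mathcal{N}(\mathbf{P}_t,\sigma_t^2 I)$ is mapped to $\mathcal{N}(\pi(\mathbf{P}_t),\sigma_t^2 I)$. Combined with the stated permutation invariance of $r$ and of the configuration, each permuted sample is therefore an identically distributed sub-Gaussian draw. I would then treat the $|\Pi|K$ terms as independent, or at least with variance proxy scaling as $1/(|\Pi|K)$, so that Hoeffding's inequality for sub-Gaussian variables applies with that effective sample size.

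\textbf{This is the main obstacle.} Permuted copies of the same draw are deterministic functions of one another, so independence genuinely fails. I would first try to justify the $1/(|\Pi|K)$ variance proxy through an orbit argument: view each draw as an independent sample on the quotient space, with $|\Pi|$ representatives per orbit. If that cannot be made rigorous, the fallback is to state it as the modelling assumption implicit in the lemma's hypothesis that $\mathbf{P}$ is permutation invariant.

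\textbf{Step 2: numerator and denominator concentration.} With Step 1 in hand, sub-Gaussian Hoeffding, applied to each coordinate of $\hat A$ with a union bound over the dimension, gives with probability at least $1-\delta/2$
\[
\|\hat A-A\| \le C_A\sqrt{\frac{\log(1/\delta)}{|\Pi|K}}.
\]
The same argument gives, with probability at least $1-\delta/2$,
\[
|\hat B - B| \le C_B\sqrt{\frac{\log(1/\delta)}{|\Pi|K}}.
\]
Here $C_A$ and $C_B$ are the sub-Gaussian norms, and both depend on $(\mathbf{P}_t,\mathcal{D},\beta)$.

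\textbf{Step 3: lower bound on $\hat B$.} On the same event, either $|\Pi|K$ is large enough that $|\hat B - B| \le B/2$, which gives $\hat B \ge B/2$, or the claimed bound is trivially absorbed into the constant. Plugging Steps 2 and 3 into the decomposition gives
\[
\Big\|\frac{\hat A}{\hat B}-\frac{A}{B}\Big\| \le \frac{2}{B}\Big(C_A + \frac{\|A\|}{B}\,C_B\Big)\sqrt{\frac{\log(1/\delta)}{|\Pi|K}}.
\]
Finally, since $\sqrt{\log(1/\delta)}\le \log(1/\delta)$ once $\delta\le e^{-1}$, with larger $\delta$ handled by enlarging the constant, setting $c(\mathbf{P}_t,\mathcal{D}) := \frac{2}{B}\big(C_A + \frac{\|A\|}{B}C_B\big)$ and adjusting it yields \eqref{eq:score_concentration}.
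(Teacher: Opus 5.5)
\textbf{The gap is Step 1: the claimed $|\Pi|$-fold gain does not exist, and the paper's proof asserts the same unproved step.}

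Your route is the same as the paper's sketch: componentwise sub-Gaussian Hoeffding with a union bound, then symmetrization over $\Pi$ to upgrade the sample size from $K$ to $|\Pi|K$. Your ratio decomposition and lower bound on $\hat B$ are more careful than the paper, which never deals with the estimator being a ratio. Step 1, however, cannot be closed.

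By invariance of $r$, $e^{\beta r(\pi(\mathbf{P}_{0|t}^{(j)}),\mathcal{D})}=e^{\beta r(\mathbf{P}_{0|t}^{(j)},\mathcal{D})}$ for every $\pi$. So the symmetrized $\hat B$ is \emph{exactly} the plain empirical mean of the $K$ draws. For the numerator there are two readings, and neither helps:
\begin{itemize}
\item If $\nabla_{\mathbf{P}}$ differentiates $\mathbf{P}\mapsto e^{\beta r(\pi(\mathbf{P}),\mathcal{D})}=e^{\beta r(\mathbf{P},\mathcal{D})}$, the $|\Pi|$ terms again coincide.
\item If it is the gradient evaluated at $\pi(\mathbf{P}_{0|t}^{(j)})$, equivariance turns it into $\pi\,\nabla_{\mathbf{P}}e^{\beta r}(\mathbf{P}_{0|t}^{(j)},\mathcal{D})$, with $\pi$ acting as a block permutation matrix. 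The $\Pi$-average then has mean $SA$, where $S:=|\Pi|^{-1}\sum_{\pi\in\Pi}\pi$ replaces each AP's block of $A$ by the mean block. That is a biased target, not $A$.
\end{itemize}
The second reading is also where your ``identically distributed'' claim breaks. We have $\pi(\mathbf{P}_{0|t}^{(j)})\sim\mathcal{N}(\pi(\mathbf{P}_t),\sigma_t^2 I)$, which equals $\mathcal{N}(\mathbf{P}_t,\sigma_t^2 I)$ only if $\pi$ fixes $\mathbf{P}_t$. For every $\pi$ that would force all APs to coincide, contradicting \eqref{eq:constraint_6}.

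Your orbit argument, made rigorous, confirms that the effective sample size is $K$. Each draw is \emph{one} independent point of the quotient, and its $|\Pi|$ representatives carry identical values. The delta method then gives $\sqrt{K}\,(\hat A/\hat B-A/B)\to\mathcal{N}(0,\Sigma)$ with $\Sigma=B^{-2}\operatorname{Cov}\big(\nabla_{\mathbf{P}}e^{\beta r(\mathbf{P}_{0|t},\mathcal{D})}-(A/B)\,e^{\beta r(\mathbf{P}_{0|t},\mathcal{D})}\big)$. This does not involve $|\Pi|$ and is nonzero in general, so any admissible $c$ must grow like $\sqrt{|\Pi|}\,\|\Sigma\|^{1/2}$. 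Declaring the $1/(|\Pi|K)$ variance proxy a modelling assumption assumes the conclusion. The paper's sketch does exactly this, asserting without argument that symmetrization ``reduces variance without introducing bias''. So you have reproduced the paper's unsupported step rather than repaired it.

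\textbf{What you can actually prove.} Your Steps 2--3, with $K$ in place of $|\Pi|K$, give $c_0(\mathbf{P}_t,\mathcal{D})\log(1/\delta)/\sqrt{K}$. That is also all that the estimator cited in the lemma, \eqref{eq:score_function_estimation}, can give, since it contains no permutation sum. The lemma as literally stated then follows without Step 1. $\mathbf{P}_t\in\mathbb{R}^{3N}$ fixes $N$ and hence $|\Pi|=N!$, so $c(\mathbf{P}_t,\mathcal{D}):=\sqrt{|\Pi|}\,c_0(\mathbf{P}_t,\mathcal{D})$ is admissible under the lemma's quantifiers. This makes plain that the $|\Pi|$ in the bound is cosmetic.

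\textbf{A smaller point in Step 3.} The branch where the bound is ``trivially absorbed into the constant'' needs an a priori bound on $\|\hat A/\hat B-A/B\|$, which the sub-Gaussian hypotheses alone do not supply. It is immediate if, as in the later theorem, $\|\nabla_{\mathbf{P}}r\|\le G$. Both ratios are then $\beta$ times weighted averages of $\nabla_{\mathbf{P}}r$, so they differ by at most $2\beta G$.
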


\textit{Proof Sketch.}
The proof follows directly from the concentration analysis of Monte Carlo score estimators
in \citep{akhound2024iterated}.
Under the sub-Gaussian assumptions on $e^{\beta r(\mathbf{P}, \mathcal{D})}$ and its gradient,
the Monte Carlo estimator of the score function admits a concentration bound.
Applying Hoeffding's inequality to each component of the estimator and a union bound yields
the $\mathcal{O}(1/\sqrt{K})$ deviation bound in ~\eqref{eq:score_concentration}.
Exploiting permutation invariance, the estimator can be symmetrized by averaging over the permutation group,
which reduces variance without introducing bias.
As a result, the deviation bound improves to $\mathcal{O}(1/\sqrt{|\Pi|K})$,
where $|\Pi|$ denotes the cardinality of the permutation group.

\begin{proofwithframe}
We decompose the discrepancy between the generated distribution
$\hat{p}_0(\mathbf{P}, \mathcal{D})$ and the target distribution
$p(\mathbf{P}, \mathcal{D})$ into a modeling error induced by reward approximation
and a statistical error arising from Monte Carlo estimation.

\paragraph{Step 1: Path-space KL induced by reward approximation.}
Let $p$ denote the path measure induced by the reverse-time SDE with the
true drift $b(\mathbf{P}_t,t)$ induced by true score function $-\frac{d\sigma_t^2}{dt}\log p_t (\mathbf{P}|\mathcal{D})$, and let $p_{0}$ denote the path measure 
corresponding to the reverse-time SDE driven by the drift $\hat b(\mathbf{P}_t,t)$
constructed from the estimated reward function.
By Lemma~\ref{lemma:girsanov}, we have
\begin{equation}
\label{eq:path_kl_reward}
D_{\mathrm{KL}}\big(p \,\|\, \hat{p}_0\big)
=
\frac{1}{2}
\mathbb{E}_{p}
\left[
\int_0^T
\left\|
(\frac{d\sigma_t}{dt})^{-1}
\big(
b(\mathbf{P}_t,t)-\hat b(\mathbf{P}_t,t)
\big)
\right\|^2
dt
\right].
\end{equation}
Since the diffusion coefficient $\sigma_t$ are bounded in VE SDE, the above expression can be
upper bounded by
\begin{equation}
\label{eq:reward_gap_bound}
D_{\mathrm{KL}}\big(p \,\|\, \hat{p}_0\big)
\le
c_1
\int_0^T
\|
b(\mathbf{P}_t,t)-\hat b(\mathbf{P}_t,t)
\|^2
dt,
\end{equation}
where $c_1>0$ is a constant. Here, we introduce the intermediate variable $\tilde{b}$ which is the exact score without Monte Carlo estimation. Then, \eqref{eq:reward_gap_bound} becomes 
\begin{equation}
\begin{split}
    D_{\mathrm{KL}}\big(p \,\|\, \hat{p}_0\big)
& \le
c_1
\int_0^T
\|
b(\mathbf{P}_t,t)- \tilde b(\mathbf{P}_t,t)  + \tilde b(\mathbf{P}_t,t)-\hat b(\mathbf{P}_t,t)
\|^2
dt \\
& \leq c_1
\int_0^T
\underbrace{2\|b(\mathbf{P}_t,t)- \tilde b(\mathbf{P}_t,t) \|^2}_{\text{induced by the reward gap}} +  \underbrace{2\| \tilde b(\mathbf{P}_t,t)-\hat b(\mathbf{P}_t,t)\|^2}_{\text{induced by the Monte Carlo estimation}}
dt.
\end{split}
\end{equation}

\paragraph{Step 2: Statistical error from Monte Carlo score estimation.}
In practice, the drift $\tilde b(\mathbf{P}_t,t)$ is not available in closed form
and is approximated via Monte Carlo estimation of the score function
$\nabla_{\mathbf{P}} \log \tilde{p}_t(\mathbf{P}_t \mid \mathcal{D})$
as in Eq.~\eqref{eq:score_function_estimation}.
Let $\hat b(\mathbf{P}_t,t)$ denote the resulting empirical drift.

By Lemma~\ref{lemma:score_concentration}, with probability at least $1-2\delta$,
the deviation between the empirical drift $\widehat b$ and its population
counterpart $\hat b$ satisfies
\begin{equation}
\label{eq:mc_drift_error}
\|
\tilde{b}(\mathbf{P}_t,t) - \hat b(\mathbf{P}_t,t)
\|
\le
\frac{c_2 \log(1/\delta)}{\sqrt{|\Pi|K}},
\end{equation}
for some constant $c_2>0$.
This term captures the statistical error due to finite Monte Carlo sampling
and permutation symmetrization.

\paragraph{Step 3: Combining modeling and statistical errors.}
Applying the data processing inequality to the terminal distributions and
combining equations~\ref{eq:reward_gap_bound} and \ref{eq:mc_drift_error}, we obtain
that, with probability at least $1-2\delta$,
\begin{equation}
D_{\mathrm{KL}}\big(p(\mathbf{P}|\mathcal{D}) \,\|\, \hat{p}_0(\mathbf{P}|\mathcal{D})\big)
\le
\underbrace{
c_1 \int_0^T
2\|
b(\mathbf{P}_t,t)-\tilde b(\mathbf{P}_t,t)
\|^2 dt
}_{\text{error from reward estimation}}
+
\underbrace{
\frac{2c_2T\log(1/\delta)}{|\Pi|K}
}_{\text{error from Monte Carlo estimation}},
\end{equation}
which completes the proof.
\end{proofwithframe}

To further analyse the gap induced by reward estimation, we give the Theorem~\ref{thm:reward_score_gap} as follows. 

\begin{claimwithframe}
\label{thm:reward_score_gap}
Let the target score function at time $t$ be defined as
\begin{equation}
\label{eq:true_score}
\nabla_{\mathbf{P}} \log p_t (\mathbf{P}_t \mid \mathcal{D})
=
\frac{
\mathbb{E}_{\mathbf{P}_{0|t} \sim \mathcal{N}(\mathbf{P}_t, \sigma_t^2 I)}
\!\left[
\nabla_{\mathbf{P}} e^{\beta r(\mathbf{P}_{0|t}, \mathcal{D})}
\right]
}{
\mathbb{E}_{\mathbf{P}_{0|t} \sim \mathcal{N}(\mathbf{P}_t, \sigma_t^2 I)}
\!\left[
e^{\beta r(\mathbf{P}_{0|t}, \mathcal{D})}
\right]
}.
\end{equation}
Let $\hat r(\mathbf{P},\mathcal{D})$ be an approximate reward function satisfying
\begin{equation}
\label{eq:reward_gap}
|r(\mathbf{P},\mathcal{D}) - \hat r(\mathbf{P},\mathcal{D})|
\le \varepsilon,
\quad \forall\,\mathbf{P},
\end{equation}
and assume that both $r$ and $\hat r$ are differentiable with gradients bounded as
\[
\|\nabla_{\mathbf{P}} r(\mathbf{P},\mathcal{D})\|,
\;
\|\nabla_{\mathbf{P}} \hat r(\mathbf{P},\mathcal{D})\|
\le G.
\]
Let $\hat s(\mathbf{P}_t)$ denote the score induced by $\hat r$ via \eqref{eq:true_score}.
Then there exist constants $C_1,C_2>0$ such that
\begin{equation}
\label{eq:score_gap_bound}
\left\|
\hat s(\mathbf{P}_t)
-
\nabla_{\mathbf{P}} \log p_t (\mathbf{P}_t \mid \mathcal{D})
\right\|
\le
C_1 \beta \varepsilon
+
C_2 \beta^2 \varepsilon .
\end{equation}
In particular, the score deviation satisfies
\[
\left\|
\hat s - s
\right\|^2
=
\mathcal{O}(\beta^4 \varepsilon^2).
\]
\end{claimwithframe}

\begin{proofwithframe}
    Recall that the score function can be rewritten using the identity
$\nabla_{\mathbf{P}} e^{\beta r} = \beta e^{\beta r} \nabla_{\mathbf{P}} r$.
Thus,
\begin{equation}
\label{eq:score_rewrite}
\nabla_{\mathbf{P}} \log p_t (\mathbf{P}_t \mid \mathcal{D})
=
\beta
\frac{
\mathbb{E}\!\left[
e^{\beta r(\mathbf{P}_{0|t},\mathcal{D})}
\nabla_{\mathbf{P}} r(\mathbf{P}_{0|t},\mathcal{D})
\right]
}{
\mathbb{E}\!\left[
e^{\beta r(\mathbf{P}_{0|t},\mathcal{D})}
\right]
}
=
\beta\,\mathbb{E}_{\pi_r}
\!\left[
\nabla_{\mathbf{P}} r
\right],
\end{equation}
where $\pi_r$ is the probability measure defined by
\[
\pi_r(d\mathbf{P}_{0|t})
\propto
e^{\beta r(\mathbf{P}_{0|t},\mathcal{D})}
\mathcal{N}(\mathbf{P}_t,\sigma_t^2 I)\,d\mathbf{P}_{0|t}.
\]
Similarly, the approximate score induced by $\hat r$ is given by
\[
\hat s(\mathbf{P}_t)
=
\beta\,\mathbb{E}_{\pi_{\hat r}}
\!\left[
\nabla_{\mathbf{P}} \hat r
\right].
\]

We decompose the score deviation as
\begin{align}
\label{eq:score_decomp}
\|\hat s - s\|
&=
\beta
\left\|
\mathbb{E}_{\pi_{\hat r}}[\nabla_{\mathbf{P}} \hat r]
-
\mathbb{E}_{\pi_r}[\nabla_{\mathbf{P}} r]
\right\| \\
&\le
\beta
\left\|
\mathbb{E}_{\pi_{\hat r}}
\!\left[
\nabla_{\mathbf{P}} \hat r - \nabla_{\mathbf{P}} r
\right]
\right\|
+
\beta
\left\|
\mathbb{E}_{\pi_{\hat r}}[\nabla_{\mathbf{P}} r]
-
\mathbb{E}_{\pi_r}[\nabla_{\mathbf{P}} r]
\right\|.
\nonumber
\end{align}

We first bound the gradient discrepancy term.
By the uniform reward gap assumption
$|r(\mathbf{P},\mathcal{D})-\hat r(\mathbf{P},\mathcal{D})|\le\varepsilon$
and differentiability of both functions,
there exists a constant $C_1>0$ such that
\begin{equation}
\left\|
\nabla_{\mathbf{P}} \hat r(\mathbf{P},\mathcal{D})
-
\nabla_{\mathbf{P}} r(\mathbf{P},\mathcal{D})
\right\|
\le
C_1 \varepsilon,
\end{equation}
which implies
\[
\left\|
\mathbb{E}_{\pi_{\hat r}}
\!\left[
\nabla_{\mathbf{P}} \hat r - \nabla_{\mathbf{P}} r
\right]
\right\|
\le
C_1 \varepsilon.
\]

Next, we bound the distribution shift term.
The reward gap implies
\[
e^{-\beta\varepsilon}
\le
\frac{e^{\beta \hat r(\mathbf{P},\mathcal{D})}}{e^{\beta r(\mathbf{P},\mathcal{D})}}
\le
e^{\beta\varepsilon},
\]
which yields a stability bound between the tilted measures
$\pi_r$ and $\pi_{\hat r}$.
In particular, their total variation distance satisfies
\[
\mathrm{TV}(\pi_r,\pi_{\hat r}) \le C \beta \varepsilon
\]
for some constant $C>0$.
Using the boundedness assumption
$\|\nabla_{\mathbf{P}} r\|\le G$,
we obtain
\begin{equation}
\left\|
\mathbb{E}_{\pi_{\hat r}}[\nabla_{\mathbf{P}} r]
-
\mathbb{E}_{\pi_r}[\nabla_{\mathbf{P}} r]
\right\|
\le
2G\,\mathrm{TV}(\pi_r,\pi_{\hat r})
\le
C_2 \beta \varepsilon,
\end{equation}
for some constant $C_2>0$.

Substituting the above bounds into \eqref{eq:score_decomp}
yields
\[
\|\hat s - s\|^2
\le
C_1^2 \beta^2 \varepsilon^2
+
C_2^2 \beta^4 \varepsilon^2 + 2C_1C_2\beta^3\varepsilon^2,
\]
which completes the proof.
\end{proofwithframe}

\clearpage
\section{Reward Design and Architecture}
\label{append:reward_design}

\begin{figure}[ht]
    \centering
    \includegraphics[width=0.95\linewidth]{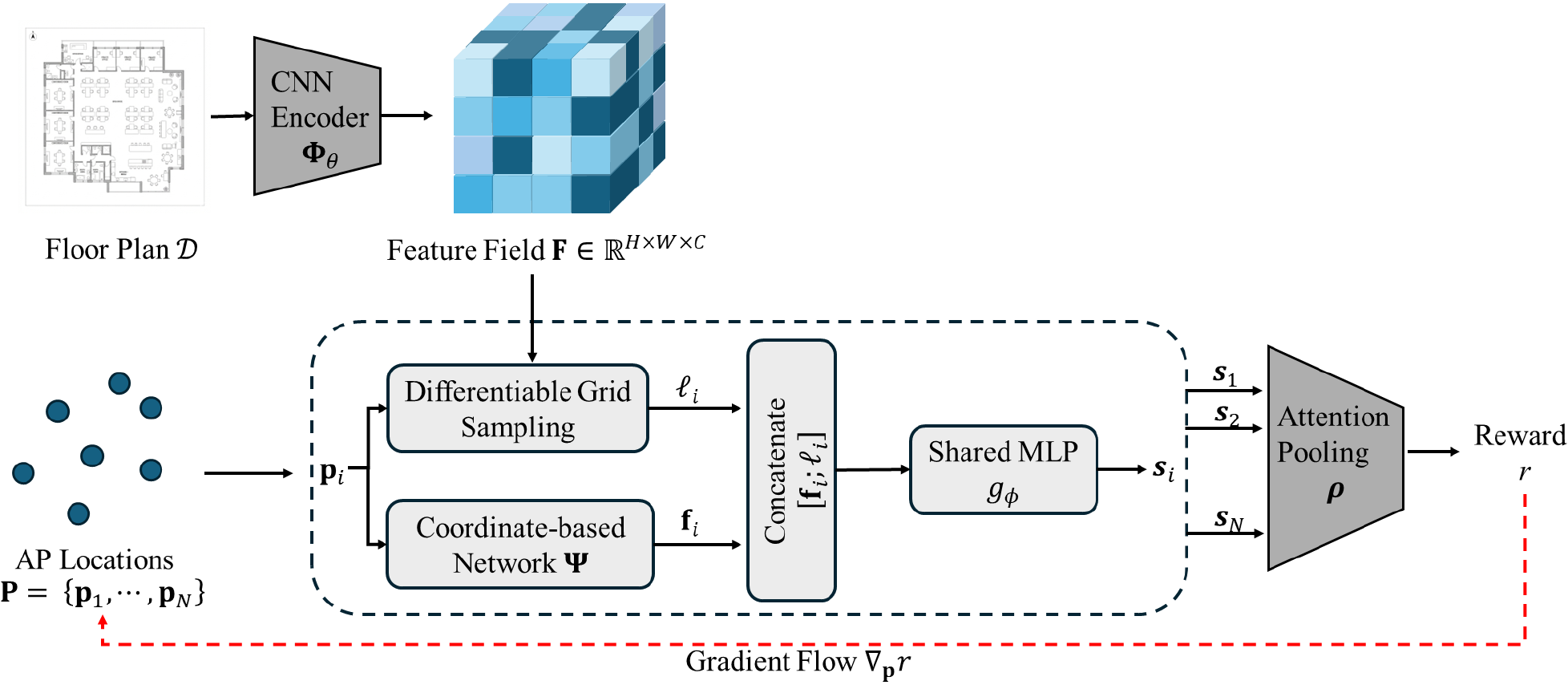}
    \caption{Framework for reward design.}
    \label{fig:reward}
\end{figure}
We design the reward function to be a differentiable, geometry-aware functional that evaluates the quality of an AP deployment conditioned on the building layout.
Let $\mathcal{D}$ denote a building floor plan and
$\mathbf{P} = \{\mathbf{p}_1,\dots,\mathbf{p}_N\}$ the deployment of $N$ access points, where $\mathbf{p}_i \in \mathbb{R}^3$ represents the spatial location of the $i$-th AP.
The reward function is defined as
\begin{equation}
r(\mathbf{P}, \mathcal{D}) = \mathcal{R}\big( \{\mathbf{p}_i\}_{i=1}^N , \mathcal{D} \big),
\end{equation}
where $\mathcal{R}$ is permutation invariant with respect to AP indices.

\paragraph{Floor Plan Encoding.}
The floor plan $\mathcal{D}$ is first rasterized into a multi-channel spatial representation capturing geometric and material properties, such as walls, free space, and obstacles.
A fully convolutional encoder $\Phi_\theta$ maps the floor plan to a dense feature field
\begin{equation}
\mathbf{F} = \Phi_\theta(\mathcal{D}), \quad \mathbf{F} \in \mathbb{R}^{H \times W \times C},
\end{equation}
where $\mathbf{F}(x,y,z)$ encodes local geometric information at spatial location $(x,y,z)$.
The encoder is shared across all floor plans, enabling generalization to unseen layouts.

\paragraph{Location-Conditioned Feature Query.}
Given an AP location $\mathbf{p}_i = (x_i, y_i,z_i)$, we obtain a local geometric descriptor by querying the feature field $\mathbf{F}$ using differentiable bilinear interpolation,
\begin{equation}
\mathbf{f}_i = \mathbf{F}(x_i, y_i,z_i),
\end{equation}
implemented via grid sampling.
This operation preserves differentiability with respect to the AP location.

\paragraph{Continuous Location Embedding.}
To model the continuous dependence of the reward on AP locations, each $\mathbf{p}_i$ is independently embedded using a coordinate-based network $\Psi$,
\begin{equation}
\boldsymbol{\ell}_i = \Psi(\mathbf{p}_i).
\end{equation}
This embedding provides a smooth and expressive representation of spatial coordinates.

\paragraph{Reward Aggregation.}
The local geometric feature $\mathbf{f}_i$ and the location embedding $\boldsymbol{\ell}_i$ are concatenated and passed through a shared MLP $g_\phi$ to produce a per-AP contribution,
\begin{equation}
s_i = g_\phi([\mathbf{f}_i, \boldsymbol{\ell}_i]).
\end{equation}
The overall reward is then obtained via a permutation-invariant aggregation operator,
\begin{equation}
r(\mathbf{P}, \mathcal{D}) = \rho\left( \{ s_i \}_{i=1}^N \right),
\end{equation}
where $\rho$ is an attention-based pooling function.

\paragraph{Differentiability and Optimization.}
All components of the reward function are differentiable with respect to the AP locations.
Consequently, the reward gradient $\nabla_{\mathbf{P}} r(\mathbf{P}, \mathcal{D})$ or $\nabla_{\mathbf{P}} e^{\beta r(\mathbf{P}, \mathcal{D})}$
can be computed via automatic differentiation and directly used for convex optimization or diffusion sampling. 

\clearpage
\section{Pseudo Code and Algorithm Implementation} \label{append:code_and_implementation}





\begin{algorithm}[H]
\caption{Weighted Sampling (a.k.a. Sequential Monte Carlo (SMC)) with Gaussian Proposal}
\label{alg:smc_gaussian}
\begin{algorithmic}
\REQUIRE Region $\mathcal{D}$, reward function $r(\mathbf{P},\mathcal{D})$, temperature $\beta$, proposal variance $\sigma^2$, number of particles $K$, iterations $T$, threshold $K_{thresh}$ (e.g., $K/2$)
\ENSURE Weighted samples $\{\mathbf{P}_T^{(k)}, w_T^{(k)}\}_{k=1}^K$ approximating $p(\mathbf{P}|\mathcal{D})$

\STATE Initialize particles $\{\mathbf{P}_0^{(k)}\}_{k=1}^K \sim \mathcal{N}(\mathbf{0}, I)$
\STATE Initialize weights $w_0^{(k)} = \frac{1}{K}$ for all $k = 1, \dots, K$

\FOR{$t = 1$ \TO \ $T$}
    \FOR{$k = 1$ \TO \ $K$}
        \STATE \textbf{1. Gaussian Random Walk Proposal:}
        \STATE Sample $\mathbf{P}_t^{(k)} \sim \mathcal{N}(\mathbf{P}_{t-1}^{(k)}, \sigma^2 I)$
        
        \STATE \textbf{2. Incremental Weight Update:}
        \STATE Compute permutation-invariant score:
        \[
        \alpha_t^{(k)} = \frac{1}{|\Pi|} \sum_{\pi \in \Pi} \exp\!\big(\beta r(\pi(\mathbf{P}_t^{(k)}),\mathcal{D})\big)
        \]
        \STATE Update unnormalized weight: $\tilde{w}_t^{(k)} = w_{t-1}^{(k)} \cdot \alpha_t^{(k)}$
    \ENDFOR

    \STATE \textbf{3. Weight Normalization:}
    \[
    W_t = \sum_{j=1}^K \tilde{w}_t^{(j)}, \quad w_t^{(k)} = \frac{\tilde w_t^{(k)}}{W_t}
    \]

    \STATE \textbf{4. Adaptive Resampling (Prevents Weight Collapse):}
    \STATE Calculate Effective Sample Size: $N_{eff} = \left( \sum_{k=1}^K (w_t^{(k)})^2 \right)^{-1}$
    \IF{$N_{eff} < K_{thresh}$} 
        \STATE Resample $K$ particles from $\{\mathbf{P}_t^{(k)}\}_{k=1}^K$ according to weights $\{w_t^{(k)}\}_{k=1}^K$
        \STATE Reset weights $w_t^{(k)} = \frac{1}{K}$ for all $k = 1, \dots, K$
    \ENDIF
\ENDFOR

\STATE \textbf{return} $\{\mathbf{P}_T^{(k)}, w_T^{(k)}\}_{k=1}^K$
\end{algorithmic}
\end{algorithm}

\begin{algorithm}[H]
\caption{Weighted Sampling (SMC) with Gradient-Accelerated Langevin Proposals}
\label{alg:smc_langevin}
\begin{algorithmic}
\REQUIRE Region $\mathcal{D}$, reward function $r(\mathbf{P},\mathcal{D})$, temperature $\beta$, step size $\eta$, number of particles $K$, total iterations $T$, resampling threshold $K_{thresh}$ (e.g., $K/2$)
\ENSURE A set of weighted particles $\{\mathbf{P}_T^{(k)}, w_T^{(k)}\}_{k=1}^K$ approximating $p(\mathbf{P}|\mathcal{D})$

\STATE Initialize particles $\{\mathbf{P}_0^{(k)}\}_{k=1}^K \sim \mathcal{N}(\mathbf{0}, I)$
\STATE Initialize weights $w_0^{(k)} = \frac{1}{K}$

\FOR{$t = 1$ \TO \ $T$}
    \FOR{$k = 1$ \TO \ $K$}
        \STATE \textbf{1. Langevin Proposal Step:}
        \STATE Sample $\boldsymbol{\epsilon} \sim \mathcal{N}(\mathbf{0},I)$
        \[
        \mathbf{P}_t^{(k)} = \mathbf{P}_{t-1}^{(k)} + \eta \nabla_{\mathbf{P}} r(\mathbf{P}_{t-1}^{(k)},\mathcal{D}) + \sqrt{2\eta}\,\boldsymbol{\epsilon}
        \]
        
        \STATE \textbf{2. Incremental Weight Calculation:}
        \STATE Compute permutation-invariant score:
        \[
        \alpha_t^{(k)} = \frac{1}{|\Pi|} \sum_{\pi \in \Pi} \exp\!\big(\beta r(\pi(\mathbf{P}_t^{(k)}),\mathcal{D})\big)
        \]
        \STATE Update unnormalized weight: $\tilde{w}_t^{(k)} = w_{t-1}^{(k)} \cdot \alpha_t^{(k)}$
    \ENDFOR

    \STATE \textbf{3. Weight Normalization:}
    \[
    W_t = \sum_{j=1}^K \tilde{w}_t^{(j)}, \quad w_t^{(k)} = \frac{\tilde w_t^{(k)}}{W_t}
    \]

    \STATE \textbf{4. Adaptive Resampling (to prevent variance explosion):}
    \STATE Calculate Effective Sample Size: $N_{eff} = \left( \sum_{k=1}^K (w_t^{(k)})^2 \right)^{-1}$
    \IF{$N_{eff} < K_{thresh}$} 
        \STATE Resample $K$ particles from $\{\mathbf{P}_t^{(k)}\}$ with probabilities $\{w_t^{(k)}\}$
        \STATE Reset weights: $w_t^{(k)} = \frac{1}{K}$ for all $k = 1, \dots, K$
    \ENDIF
\ENDFOR

\STATE \textbf{return} $\{\mathbf{P}_T^{(k)}, w_T^{(k)}\}_{k=1}^K$
\end{algorithmic}
\end{algorithm}

\clearpage
\section{Training hyperparameters} \label{append:training_settings}

\subsection{Network Architectures}\label{append:covnet_arch}

\begin{table}[htbp]
\centering
\caption{CoverageRegressor architecture summary (permutation-invariant coverage prediction network).}
\label{tab:covnet-arch}
\renewcommand{\arraystretch}{1.12}
\setlength{\tabcolsep}{4pt}
\begin{tabular}{l l l l}
\toprule
\textbf{Module} & \textbf{Sub-Module} & \textbf{Structure/Computation} & \textbf{Output Dim.} \\
\midrule

\multirow{6}{*}{\shortstack[l]{AP Encoder\\(shared)}}
& Input
& AP coordinates $\mathbf{p}\in\mathbb{R}^{2N}$, reshape to $\mathbf{P}\in\mathbb{R}^{N\times 2}$
& $N\times 2$ \\
& MLP (per-AP)
& $\texttt{Linear}(2\rightarrow32)$ + ReLU
& $32$ \\
&& $\texttt{Linear}(32\rightarrow64)$ + ReLU
& $64$ \\
&& $\texttt{Linear}(64\rightarrow128)$ + ReLU
& $128$ \\
& Pooling
& $\mathbf{f}_{\text{AP}}=\texttt{MeanPool}(\{\phi(\mathbf{p}_k)\}_{k=1}^{K})$
& $128$ \\
\midrule

\multirow{5}{*}{\shortstack[l]{Indoor\\Encoder}}
& Input
& Indoor samples $\mathbf{U}\in\mathbb{R}^{K\times2}$ 
& $K\times2$ \\
& MLP (per-point)
& $\texttt{Linear}(2\rightarrow16)$ + ReLU
& $16$ \\
&& $\texttt{Linear}(16\rightarrow32)$ + ReLU
& $32$ \\
&& $\texttt{Linear}(32\rightarrow64)$
& $64$ \\
& Pooling
& $\mathbf{f}_{\text{in}}=\texttt{MeanPool}(\{\psi(\mathbf{u}_n)\}_{n=1}^{N})$
& $64$ \\
\midrule

\multirow{6}{*}{\shortstack[l]{Coverage\\Predictor}}
& Concat
& $\mathbf{z}=[\mathbf{f}_{\text{AP}},\mathbf{f}_{\text{in}}]$
& $192$ \\
& MLP head
& $\texttt{Linear}(192\rightarrow256)$ + ReLU + Dropout(0.2)
& $256$ \\
&& $\texttt{Linear}(256\rightarrow128)$ + ReLU + Dropout(0.2)
& $128$ \\
&& $\texttt{Linear}(128\rightarrow64)$ + ReLU
& $64$ \\
& Output
& $\texttt{Linear}(64\rightarrow1)$ + Sigmoid
& $1$ \\
& Range
& $\hat{r}\in[0,1]$ (predicted coverage rate)
& $1$ \\
\bottomrule
\end{tabular}
\end{table}

\subsection{Training Hyperparameters}\label{append:covnet_hparams}

\begin{table}[htbp]
\centering
\caption{Training hyperparameters and configuration for CoverageRegressor.}
\label{tab:covnet-hparams}
\renewcommand{\arraystretch}{1.10}
\setlength{\tabcolsep}{10pt}
\begin{tabular}{l c | l c}
\toprule
\textbf{Hyperparameter} & \textbf{Value} & \textbf{Hyperparameter} & \textbf{Value} \\
\midrule
optimizer & Adam & loss & MSE \\
learning rate & $1\times10^{-4}$ & weight decay & $1\times10^{-5}$ \\
epochs & 100 & batch size & 16 \\
validation split & 0.4 & random seed & 42 \\
GPU & 2$\times$NVIDIA GeForce RTX 4090 & activation & ReLU \\
\bottomrule
\end{tabular}
\end{table}

\subsection{Diffusion Optimizer Hyperparameters}
\begin{table}[htbp]
\centering
\caption{Hyperparameters of the diffusion-based AP deployment optimizer.}
\label{tab:diffusion-hparams}
\renewcommand{\arraystretch}{1.10}
\setlength{\tabcolsep}{10pt}
\begin{tabular}{l c}
\toprule
\textbf{Hyperparameter} & \textbf{Value} \\
\midrule
Number of particles & 10 \\
Diffusion steps & 100 \\
Samples per step & 10 \\
Log Linear Noise schedule $(\beta_0, \beta_T)$ & $(1\times10^{-3},\,2)$ \\
Temperature & 1.0 \\
Constraint penalty weight & 10.0 \\
\bottomrule
\end{tabular}
\end{table}

\clearpage
\section{Data generation} \label{append:data_generation}

In wireless network design, accurately characterizing site-specific propagation is essential for
reliable planning and performance evaluation. Ray-tracing has been widely adopted as a
geometry-based and physics-grounded tool to predict location-dependent channel behaviors in
complex indoor environments, where signal attenuation and fluctuations are strongly shaped by
blockage and multipath interactions with walls, doors, windows, and other objects. By explicitly
modeling dominant propagation mechanisms such as line-of-sight (LoS) transmission, reflections,
diffractions, and scattering, ray-tracing enables the computation of high-fidelity radio maps that preserve fine-grained spatial heterogeneity.
This provides a principled basis for tasks such as access point placement, interference-aware
planning, and throughput-driven optimization under realistic indoor layouts.

Following the general ray-tracing formulation \cite{Goldsmith2005Wireless}, the received bandpass signal is modeled as:
\begin{equation}
\begin{aligned}
r_{\mathrm{total}}(t)
= \mathrm{Re} \Bigg\{
\frac{\lambda}{4\pi}
\Bigg[
&\sqrt{G_0}\,u(t)\,\frac{e^{-j\frac{2\pi}{\lambda}d_0}}{d_0}
+\sum_{i=1}^{N_r} R_i \sqrt{G_i}\,u(t-\tau_i)\,\frac{e^{-j\frac{2\pi}{\lambda}d_i}}{d_i} \\
&+\sum_{j=1}^{N_d} L_j(v)\sqrt{G_{d,j}}\,u(t-\tau_j)\,
\frac{e^{-j\frac{2\pi}{\lambda}\left(d_j^{\alpha}+d_j^{\beta}\right)}}{d_j^{\alpha}d_j^{\beta}}
+\sum_{k=1}^{N_s}\sqrt{G_{s,k}\sigma_k}\,u(t-\tau_k)\,
\frac{e^{-j\frac{2\pi}{\lambda}\left(s_k^{\alpha}+s_k^{\beta}\right)}}{\sqrt{4\pi\,s_k^{\alpha}s_k^{\beta}}}
\Bigg]
e^{j2\pi f_c t}
\Bigg\}.
\end{aligned}
\label{eq:general_ray_tracing_superposition}
\end{equation}

Here, $u(t)$ denotes the transmitted complex baseband signal modulating a carrier at frequency $f_c$, with wavelength $\lambda=c/f_c$. We align the time origin to the LoS arrival, so that the LoS component is expressed as $u(t)$. The LoS path has propagation distance $d_0$ and antenna power gain product $G_0$. Since the received signal is formed via complex field superposition, the corresponding amplitude factor appears as $\sqrt{G_0}$. For the $i$-th reflected ray, $d_i$ denotes the total traveled distance and $\tau_i=d_i/c$ is the corresponding delay. The term $G_i$ is the product of the transmit and receive antenna power gains along that ray, and $R_i$ is the reflection coefficient, or the product of reflection coefficients for multi-bounce paths. For diffracted rays, $L_j(v)$ denotes an equivalent diffraction coefficient parameterized by $v$ that captures the diffraction loss and incorporates the geometric spreading associated with the two-leg propagation distances $(d_j^{\alpha},d_j^{\beta})$, and $\tau_j$ is the associated delay. For scattered rays, $\sigma_k$ denotes an effective scattering strength for the $k$-th scatterer, and $(s_k^{\alpha},s_k^{\beta})$ are the distances before and after the scattering point, with $\tau_k$ being the associated delay.

To operationalize the above physics-grounded ray-tracing model at scale, we build a large-scale indoor deployment benchmark with Wireless InSite \cite{RemcomWirelessInSite} over 80 building layouts grouped into four geometric complexity levels (Level 1–4), enabling systematic evaluation across increasing indoor structural complexity. The resulting dataset comprises dense radio maps produced by high-fidelity, physics-based electromagnetic propagation simulations in indoor environments of varying structural complexity, under a unified wireless configuration in terms of carrier frequency and antenna radiation pattern. Specifically, all APs follow the IEEE 802.11g standard \cite{ieee802.11g} operating in the 2.4 GHz band with 20 MHz bandwidth, and we adopt an omnidirectional antenna with 3 dBi gain for every AP during data generation. For each building, we generate multiple deployment instances under different AP counts to cover a range of planning scales, where each instance specifies the 3D AP locations subject to basic feasibility constraints, i.e., within the indoor boundary and valid free-space regions, with a minimum separation between APs. Importantly, the underlying building models in Wireless InSite explicitly encode geometric details and material properties, e.g., walls, doors, and windows, so the generated radio maps reflect realistic attenuation, blockage, and multipath effects driven by the physical environment rather than simplified analytic assumptions.

For each deployment instance, we sample the indoor area on a regular grid with a 0.5 m spatial resolution and run 3D ray-tracing simulations to compute location-wise supervision signals. The simulator outputs complementary radio maps that jointly characterize coverage and network quality, including the pathloss maps, the co-channel interference maps, and the downlink throughput maps. Since these quantities are derived from a consistent, physically grounded propagation and link evaluation pipeline, they provide measurement-grade supervision for learning-based planners, enabling models to capture fine-grained geometry-induced heterogeneity that is typically absent from synthetic or purely pathloss-only datasets. All maps are exported in a consistent and self-descriptive format that includes the grid specification and per-location metric values, enabling reproducible benchmarking and direct use as learning targets for data-driven planners and optimization baselines.

\clearpage
\section{More Experimental Results} \label{append:more_experimental_result}

\subsection{Ablation} \label{append:extra_ablation}

\paragraph{Effect of Temperature.}
Table~\ref{tab:diffusion_temperature_ablation} reports an ablation study on the diffusion sampling temperature $1/\beta$.
According to Claim~\ref{theorem:reward_bottleneck}, the discrepancy between the generated deployment distribution and the target distribution is upper bounded by a dominant term $\mathcal{O}(\beta^4 \varepsilon)$, which originates from the drift mismatch in the reverse-time SDEs induced by reward estimation error.
This theoretical result suggests that smaller $\beta$ is generally preferable, as it suppresses the amplification of reward estimation error and stabilizes the denoising dynamics.

Nevertheless, $\beta$ cannot be arbitrarily small.
When $\beta$ is too small (i.e., the temperature is excessively high), the target distribution
$p(\mathbf{P}, \mathcal{D}) \propto \exp(\beta r(\mathbf{P}, \mathcal{D}))$
becomes overly flat, causing the diffusion sampler to degenerate toward greedy or near-uniform sampling.
As a result, the reward signal becomes insufficiently informative, leading to degraded coverage and solution quality, as observed for small $\beta$ in Table~\ref{tab:diffusion_temperature_ablation}.

Empirically, moderate values of $\beta$ strike a balance between stability and reward sensitivity.
In particular, $\beta \in [0.5, 1.5]$ consistently achieves higher coverage, lower overlap (IOR), improved TQS, and higher success rates, validating our theoretical claim that \emph{reward estimation is the primary bottleneck} in diffusion-based RL and that temperature serves as a crucial control knob for this trade-off.

\begin{table}[ht]
\centering
\caption{Ablation study on temperature for the diffusion sampling. The task involves optimizing the deployment of 2 APs in Building Level 2 with the target coverage threshold 60\%.}
\label{tab:diffusion_temperature_ablation}
\setlength{\tabcolsep}{5pt}
\renewcommand{\arraystretch}{1.08}
\begin{tabular}{c c c c c c}
\toprule
\textbf{Temperature $1/\beta$}
& \textbf{Runtime (s)$\downarrow$}
& \textbf{Coverage (\%)$\uparrow$}
& \textbf{IOR$\downarrow$}
& \textbf{TQS$\uparrow$}
& \textbf{Success (\%)$\uparrow$} \\
\midrule
0.1 & 130.25 $\pm$ 71.56 & 55.37 $\pm$ 7.60 & 0.42 $\pm$ 0.15 & 26.54 $\pm$ 4.22 & 87.0 \\
0.2 & 52.04  $\pm$ 22.26 & 57.14 $\pm$ 6.40 & 0.31 $\pm$ 0.08 & 31.87 $\pm$ 3.15 & 86.5 \\
0.5 & 37.28  $\pm$ 28.07 & 61.78 $\pm$ 4.36 & 0.23 $\pm$ 0.10 & 38.42 $\pm$ 2.96 & 88.5 \\
1.5 & 16.47  $\pm$ 10.98 & 63.22 $\pm$ 7.35 & 0.20 $\pm$ 0.06 & 42.18 $\pm$ 3.84 & 91.0 \\
2.0 & 14.31  $\pm$ 7.47  & 63.37 $\pm$ 8.96 & 0.22 $\pm$ 0.09 & 41.25 $\pm$ 4.12 & 94.0 \\
5.0 & 17.22  $\pm$ 7.09  & 62.91 $\pm$ 4.45 & 0.21 $\pm$ 0.11 & 40.64 $\pm$ 5.37 & 89.0 \\
\bottomrule
\end{tabular}
\end{table}

\subsection{Simulation Results for Level 1 to 4}\label{append:Simulation Results for different levels}
\begin{table}[H]
\centering
\caption{Performance comparison under different AP deployment scales in Building Level 1. The simulation results are reported as (mean$\pm$std) with color coding indicating performance rankings: the \textit{top three} methods in each metric with different number of APs are highlighted in \colorbox{green!25}{(1st)}, \colorbox{blue!15}{(2nd)}, and \colorbox{gray!20}{(3rd)}. The \worst{worst} performance in each column is underlined.}
\label{tab:b18_apscale}
\setlength{\tabcolsep}{5pt}
\begin{tabular}{ccccccc}
\toprule
\textbf{APs} & \textbf{Method} 
& \textbf{Runtime (s)$\downarrow$} 
& \textbf{Coverage (\%)$\uparrow$} 
& \textbf{IOR$\downarrow$} 
& \textbf{TQS$\uparrow$} 
& \textbf{Success (\%)$\uparrow$} \\
\midrule

\multirow{10}{*}{1}
& Diffusion       & \first{42.47±25.64} & \first{87.20±4.67} & -- & \first{54.08±0.20} & \first{94.0} \\
& Weighted sampling (Gaussian)             & 279.42±206.35 & 79.28±0.77 & -- & \third{53.97±0.17} & \third{92.0} \\
& Weight sampling (Langevin)         & 152.71±122.25 & 79.29±0.99 & -- & \second{53.99±0.11} & \second{93.0} \\
& DeepSeek-V3.2    & 240.75±26.47 & 82.07±1.38 & -- & 48.88±0.50 & 91.0 \\
& GPT-4-turbo      & 564.39±170.81 & 71.07±0.46 & -- & 52.34±3.18 & 88.0 \\
& Claude-3-Haiku   & \third{111.34±2.52} & 80.53±4.02 & -- & 49.01±0.45 & 90.0 \\
& Gemini-2.5-flash & \second{102.35±54.66} & \second{84.12±0.62} & -- & 48.57±0.31 & \third{92.0} \\
& Grok-4           & \worst{1842.33±1269.39} & \third{83.60±5.21} & -- & 49.13±0.62 & 91.0 \\
& Qwen-plus        & 121.72±13.46 & 83.50±4.56 & -- & 49.08±0.25 & 90.0 \\
& Convex Optimization      & 1355.89±970.75 & \worst{48.55±14.99} & -- & \worst{45.27±2.37} & \worst{75.0} \\
\midrule

\multirow{10}{*}{2}
& Diffusion       & 11.99±6.41 & \first{93.47±2.57} & \first{0.92±0.16} & \first{37.68±6.26} & \first{92.0} \\
& Weighted sampling (Gaussian)             & 28.57±8.66 & 92.54±2.39 & \third{0.97±0.22} & \second{37.57±1.23} & \third{91.0} \\
& Weight sampling (Langevin)         & 19.57±8.66 & 92.84±1.76 & 1.19±0.36 & 35.30±5.45 & \second{91.5} \\
& DeepSeek-V3.2    & \second{8.89±1.11} & 90.68±0.33 & 1.07±0.47 & 33.77±5.11 & 89.0 \\
& GPT-4-turbo      & 34.83±55.74 & 90.71±1.98 & 1.51±0.42 & 31.44±1.20 & 88.5 \\
& Claude-3-Haiku   & \first{7.06±0.37} & 91.41±0.81 & \second{0.93±0.46} & \third{36.29±5.40} & 90.0 \\
& Gemini-2.5-flash & \third{9.53±0.66} & \third{92.89±0.18} & 1.38±0.38 & 32.44±4.20 & 90.5 \\
& Grok-4           & 19.29±3.14 & \second{92.99±1.93} & 1.39±0.24 & 29.35±5.18 & \third{91.0} \\
& Qwen-plus        & \worst{48.48±0.79} & 92.12±0.51 & 1.50±0.02 & 27.04±0.00 & 89.5 \\
& Convex Optimization      & 34.52±6.96 & \worst{77.56±15.74} & \worst{3.36±2.70} & \worst{17.02±15.46} & \worst{70.0} \\
\midrule

\multirow{10}{*}{3}
& Diffusion       & \first{41.49±11.10} & 99.28±1.36 & \first{5.21±1.71} & \first{18.21±3.64} & \first{90.0} \\
& Weighted sampling (Gaussian)             & 124.00±93.86 & 98.01±1.23 & \second{6.04±1.81} & \second{16.04±9.32} & 88.5 \\
& Weight sampling (Langevin)         & \third{100.46±2.27} & 96.32±2.04 & 8.36±2.89 & \third{9.16±8.89} & \second{89.5} \\
& DeepSeek-V3.2    & 112.56±0.89 & \second{99.81±0.18} & 9.48±0.48 & 3.50±0.95 & 87.5 \\
& GPT-4-turbo      & \worst{359.42±134.22} & 98.19±0.68 & 8.39±2.29 & 8.25±9.19 & 86.0 \\
& Claude-3-Haiku   & 210.78±1.82 & \third{99.79±0.29} & \third{6.53±2.88} & 5.18±1.01 & 88.0 \\
& Gemini-2.5-flash & 114.25±1.24 & \first{99.86±0.10} & 9.22±0.28 & 4.06±0.61 & 88.5 \\
& Grok-4           & 123.30±3.10 & 99.66±0.37 & 9.15±0.63 & 4.11±0.72 & \third{89.0} \\
& Qwen-plus        & 212.07±1.03 & 99.60±0.24 & 9.32±0.07 & 3.71±0.00 & 87.0 \\
& Convex Optimization      & \second{47.74±8.92} & \worst{86.35±8.70} & \worst{10.36±2.89} & \worst{1.95±1.88} & \worst{72.0} \\
\bottomrule
\end{tabular}
\end{table}

\begin{table}[H]
\centering
\caption{Performance comparison under different AP deployment scales in Building Level 2. The simulation results are reported as (mean$\pm$std) with color coding indicating performance rankings: the \textit{top three} methods in each metric with different number of APs are highlighted in \colorbox{green!25}{(1st)}, \colorbox{blue!15}{(2nd)}, and \colorbox{gray!20}{(3rd)}. The \worst{worst} performance in each column is underlined.}
\label{tab:b2_apscale}
\setlength{\tabcolsep}{5pt}
\begin{tabular}{ccccccc}
\toprule
\textbf{APs} & \textbf{Method} 
& \textbf{Runtime (s)$\downarrow$} 
& \textbf{Coverage (\%)$\uparrow$} 
& \textbf{IOR$\downarrow$} 
& \textbf{TQS$\uparrow$} 
& \textbf{Success (\%)$\uparrow$} \\
\midrule

\multirow{10}{*}{1}
& Diffusion       & \first{4.13±2.29} & \first{26.91±0.56} & -- & \third{43.89±22.85} & \first{91.0} \\
& Weighted sampling (Gaussian)             & 91.58±66.91 & 24.14±1.12 & -- & 29.38±26.82 & 86.0 \\
& Weight sampling (Langevin)        & 54.11±46.05 & 24.81±0.44 & -- & 36.61±21.37 & 88.0 \\
& DeepSeek-V3.2    & 7.69±1.35 & \worst{21.08±4.31} & -- & 43.07±11.76 & 85.0 \\
& GPT-4-turbo      & 167.09±164.44 & 23.04±2.24 & -- & 19.20±26.30 & 87.0 \\
& Claude-3-Haiku   & \second{5.08±0.10} & 24.08±1.20 & -- & 43.54±20.34 & 86.7 \\
& Gemini-2.5-flash & \third{6.05±0.28} & \second{26.08±0.11} & -- & 43.07±22.55 & 89.0 \\
& Grok-4           & 16.33±3.40 & 25.08±0.00 & -- & \second{46.55±20.45} & \second{90.0} \\
& Qwen-plus        & 7.01±0.49 & \third{25.81±0.60} & -- & \first{47.49±18.37} & \third{89.3} \\
& Convex Optimization      & \worst{1186.33±1606.52} & 24.87±0.64 & -- & \worst{16.17±22.95} & \worst{70.0} \\
\midrule

\multirow{10}{*}{2}
& Diffusion       & \second{49.87±16.35} & \second{53.12±0.44} & 0.50±0.29 & 34.02±4.97 & \first{89.0} \\
& Weighted sampling (Gaussian)             & 66.40±5.82 & 50.82±0.95 & 0.40±0.36 & 25.86±15.08 & \third{87.0} \\
& Weight sampling (Langevin)         & \third{51.40±1.22} & \third{52.92±0.47} & 0.83±0.44 & 31.22±13.53 & \second{88.0} \\
& DeepSeek-V3.2    & \first{14.88±3.73} & 50.81±1.44 & \second{0.12±0.10} & \third{36.91±3.13} & 84.0 \\
& GPT-4-turbo      & \worst{483.96±569.41} & 51.40±1.22 & 0.33±0.27 & \second{37.94±11.85} & 86.0 \\
& Claude-3-Haiku   & 212.04±0.41 & \first{54.03±12.24} & \first{0.06±0.05} & \first{40.89±4.48} & 83.0 \\
& Gemini-2.5-flash & 222.32±0.40 & 48.35±12.52 & 0.29±0.15 & 33.75±5.45 & 82.0 \\
& Grok-4           & 219.43±1.01 & 50.31±2.72 & \third{0.15±0.03} & 37.82±1.81 & 84.7 \\
& Qwen-plus        & 332.31±1.35 & 49.05±11.88 & 0.29±0.04 & 37.49±1.90 & 83.3 \\
& Convex Optimization      & 140.98±8.95 & \worst{32.80±6.91} & \worst{1.71±0.11} & \worst{8.28±9.85} & \worst{55.0} \\
\midrule

\multirow{10}{*}{3}
& Diffusion       & \first{56.37±6.12} & \first{89.89±1.16} & \third{1.46±0.74} & \third{19.84±10.61} & \first{88.0} \\
& Weighted sampling (Gaussian)             & \third{88.08±49.87} & 86.53±1.10 & \second{1.17±0.33} & 18.50±3.51 & 84.0 \\
& Weight sampling (Langevin)         & \second{72.08±9.87} & \third{86.94±1.04} & 1.63±0.14 & \second{19.91±5.83} & 86.0 \\
& DeepSeek-V3.2    & 116.87±0.49 & 85.96±6.27 & 1.63±1.02 & 14.21±9.21 & 80.0 \\
& GPT-4-turbo      & \worst{395.99±172.93} & 79.81±1.70 & 1.59±0.06 & \first{26.96±11.74} & 78.0 \\
& Claude-3-Haiku   & 215.10±0.26 & 81.53±7.16 & \first{0.93±0.21} & 15.74±6.06 & 76.0 \\
& Gemini-2.5-flash & 119.65±2.14 & 86.18±5.49 & 2.46±0.68 & 8.55±3.26 & \second{87.0} \\
& Grok-4           & 215.84±2.89 & 82.80±2.00 & 2.45±0.41 & 9.13±3.81 & \third{86.7} \\
& Qwen-plus        & 216.52±0.61 & \second{87.92±9.62} & 2.02±0.33 & 6.67±2.90 & 85.3 \\
& Convex Optimization      & 304.28±15.95 & \worst{49.03±8.00} & \worst{2.75±1.02} & \worst{5.00±1.25} & \worst{50.0} \\
\bottomrule
\end{tabular}
\end{table}

\begin{table}[htbp]
\centering
\caption{Performance comparison under different AP deployment scales in Building Level 3. The simulation results are reported as (mean$\pm$std) with color coding indicating performance rankings: the \textit{top three} methods in each metric with different number of APs are highlighted in \colorbox{green!25}{(1st)}, \colorbox{blue!15}{(2nd)}, and \colorbox{gray!20}{(3rd)}. The \worst{worst} performance in each column is underlined.}
\label{tab:b5_apscale}
\setlength{\tabcolsep}{5pt}
\begin{tabular}{ccccccc}
\toprule
\textbf{APs} & \textbf{Method} 
& \textbf{Runtime (s)$\downarrow$} 
& \textbf{Coverage (\%)$\uparrow$} 
& \textbf{IOR$\downarrow$} 
& \textbf{TQS$\uparrow$} 
& \textbf{Success (\%)$\uparrow$} \\
\midrule

\multirow{10}{*}{1}
& Diffusion       & \first{32.52±1.41}  & \second{37.77±0.59} & -- & \first{37.38±9.56} & \first{88.0} \\
& Weighted sampling (Gaussian)             & 143.46±13.38 & 32.54±0.75 & -- & 34.14±13.43 & 85.7 \\
& Weight sampling (Langevin)         & \third{116.28±77.23} & 32.86±0.45 & -- & \second{35.98±14.21} & \third{86.7} \\
& DeepSeek-V3.2    & 284.18±172.58 & \third{35.97±5.45} & -- & 26.14±17.71 & 84.0 \\
& GPT-4-turbo      & 165.74±150.23 & 32.56±0.80 & -- & \third{35.88±12.47} & 85.3 \\
& Claude-3-Haiku   & 182.57±43.97 & 34.04±3.61 & -- & 21.63±16.19 & 83.3 \\
& Gemini-2.5-flash & \second{51.00±38.65} & 35.73±2.95 & -- & 24.14±17.05 & 84.7 \\
& Grok-4           & 157.45±71.34 & \first{39.17±8.07} & -- & 33.67±8.58 & \second{87.3} \\
& Qwen-plus        & 473.72±228.66 & 32.08±5.40 & -- & 33.06±10.37 & 81.7 \\
& Convex Optimization      & \worst{2963.72±1623.70} & \worst{30.08±1.37} & -- & \worst{20.27±11.56} & \worst{55.0} \\
\midrule

\multirow{10}{*}{2}
& Diffusion       & \first{37.04±34.83} & \first{66.38±1.97} & \first{0.04±0.03} & \first{42.28±4.90} & \first{84.0} \\
& Weighted sampling (Gaussian)             & \third{46.51±45.86} & 62.35±1.74 & \third{0.17±0.08} & 41.09±3.51 & \third{82.0} \\
& Weight sampling (Langevin)         & \second{38.39±60.60} & \third{63.40±1.13} & 0.28±0.14 & 41.44±0.88 & \second{83.0} \\
& DeepSeek-V3.2    & 84.51±74.85 & 62.68±4.42 & 0.18±0.15 & \third{41.81±0.42} & 80.7 \\
& GPT-4-turbo      & \worst{820.09±763.82} & 62.36±1.23 & 0.19±0.08 & \second{42.02±1.22} & 81.3 \\
& Claude-3-Haiku   & 125.56±14.32 & 60.68±4.42 & 0.32±0.24 & 36.38±6.47 & 76.7 \\
& Gemini-2.5-flash & 218.80±8.29 & 61.46±1.01 & 0.62±0.41 & 41.40±0.33 & 79.3 \\
& Grok-4           & 134.18±93.97 & 61.60±10.63 & \worst{0.70±0.64} & 25.68±13.62 & 78.0 \\
& Qwen-plus        & 225.91±10.36 & \second{65.86±3.04} & \second{0.06±0.01} & 37.85±0.14 & 80.0 \\
& Convex Optimization      & 138.18±27.92 & \worst{36.17±9.91} & 0.56±0.11 & \worst{15.17±21.96} & \worst{50.0} \\
\midrule

\multirow{10}{*}{3}
& Diffusion       & \first{56.79±25.05} & \first{77.24±3.16} & \first{0.42±0.19} & \first{37.89±2.60} & \first{80.7} \\
& Weighted sampling (Gaussian)             & \third{76.77±25.93} & 70.57±2.76 & 1.07±0.32 & 32.03±8.56 & \third{76.7} \\
& Weight sampling (Langevin)         & \second{75.97±6.01}  & 71.61±2.98 & \second{0.87±0.27} & \second{34.69±2.30} & \second{78.7} \\
& DeepSeek-V3.2    & 296.97±283.76 & \second{76.43±3.32} & 1.66±1.48 & 27.03±16.26 & 75.3 \\
& GPT-4-turbo      & 540.75±133.80 & 70.07±1.63 & \third{1.02±0.40} & \third{34.38±3.96} & 74.0 \\
& Claude-3-Haiku   & 242.51±31.86 & 75.71±5.82 & 2.09±1.02 & 26.66±8.15 & 72.7 \\
& Gemini-2.5-flash & 102.56±40.70 & 74.25±1.53 & 1.63±0.95 & 24.43±10.48 & 73.3 \\
& Grok-4           & 694.91±653.31 & \third{76.36±3.02} & 2.03±0.73 & 20.64±6.75 & 74.7 \\
& Qwen-plus        & 322.09±219.62 & 75.03±4.99 & \worst{2.71±0.68} & \worst{10.60±10.04} & 73.7 \\
& Convex Optimization      & \worst{899.26±41.49} & \worst{49.20±12.35} & 1.30±1.13 & 19.35±4.22 & \worst{48.0} \\
\bottomrule
\end{tabular}
\end{table}

\clearpage
\subsection{Prompt Design for Agentic Reasoning} \label{append:demo_of_agentc_reasoning}

\tcbset{
  promptbox/.style={
    colback=gray!8,      
    colframe=gray!60,    
    boxrule=0.6pt,
    arc=2pt,
    left=6pt,right=6pt,top=6pt,bottom=6pt,
    fontupper=\ttfamily\small,
    breakable
  }
}

\begin{tcolorbox}[promptbox,title=\textbf{Initialization Prompt}]
Role \& Mission: You are a wireless network planning assistant. Given the indoor layout and a fixed number of APs, propose AP locations to maximize coverage probability under a received power threshold.
\begin{lstlisting}
Building Analysis:
Indoor area coordinates: {indoor_coordinates}
Total indoor area: {indoor_area_size} grid cells

Floor Information: {floor_info}
where the floor information is a 2d array with 0 to 3:
- 0 = free space (preferred for AP placement)
- 1 = wall (avoid placing AP here)
- 2 = window (avoid placing AP here)
- 3 = door (avoid placing AP here)
\end{lstlisting}

Before generating coordinates, you must think step-by-step.
\begin{lstlisting}
1) Deployment feasibility constraints:
- Keep the number of APs fixed:{N}.
- Each AP must be placed in indoor free-space cells only (value 0).
- Minimum separation between the APs: {d_min}.
- AP coordinates must remain within the building boundary.

2) Coverage definition (pathloss-based):
- Pathloss should be lower than {PL_threshold}.
- Interference should be lower than {I_threshold} .
- Throughput should be larger than {T_threshold}.
\end{lstlisting}

\vspace{4pt}

You must keep the following wireless propagation principles:

\vspace{4pt}
1) Distance-dependent attenuation:
   Signal strength decays with distance. Avoid leaving large uncovered areas far from any AP.

2) Wall and obstacle penetration loss:
   Walls/obstacles introduce additional attenuation. Placing an AP behind multiple walls rarely improves coverage in the target region.

3) Line-of-sight preference:
   Prioritize AP locations that provide clear LoS paths to large open regions or key corridors.

4) NLoS compensation strategy:
   If a region is blocked, cover it by placing an AP inside the region or near entrances/openings, rather than relying on deep penetration.

5) Coverage overlap control:
   Some overlap is needed for smooth coverage, but excessive clustering wastes APs. Spread APs to maximize marginal coverage gain.

6) Spatial diversity:
   Prefer placing APs in geometrically diverse positions (different rooms/corridors) to reduce redundancy and shadowed zones.

7) Boundary awareness:
   Avoid placing APs too close to walls/corners unless needed to cover edge regions; central elevated positions often produce broader coverage.

8) Iterative improvement rule:
   Use the verifier feedback to identify the largest uncovered region first, then relocate the most redundant AP toward that region.

   \vspace{4pt}
Output requirement:

- Always output exactly N AP locations in the required structured format in '[[X1, Y1, Z1], [X2, Y2, Z2], ...]' format without any other text.
\end{tcolorbox}

\begin{tcolorbox}[promptbox,title=\textbf{Refinement with Verifier Feedback}]
Role \& Mission: You receive the current best deployment and its evaluation results from the physical verifier.
If such feedback is available, you MUST treat the current best deployment as the baseline and perform a refinement rather than re-planning from scratch.
\begin{lstlisting}
Current best deployment: {best_AP_deployment_json}

Verifier feedback for the baseline:
- Coverage score: {best_coverage_score}
- Constraint status: {constraint_summary}
- Visual summary from coverage maps: {map_summary}
\end{lstlisting}

Refinement policy:

- Identify the largest uncovered region (4: not covered, 5: covered) indicated by the feedback.

- Prefer relocating APs that contribute the least marginal gain or lie in redundant clusters.

- Move APs toward the uncovered region, while preserving already well-covered areas.

- Avoid moving many APs simultaneously. Update only a small subset of APs in one iteration.

- If interference violations are reported, reduce excessive overlap by increasing spatial separation among nearby APs.

- If throughput violations are reported, prioritize improving service to those regions by placing an AP closer with fewer obstructions.

\vspace{4pt}

Output requirement:

- Always output exactly N AP locations in the required structured format in '[[X1, Y1, Z1], [X2, Y2, Z2], ...]' format without any other text.

- Do not output intermediate analysis. Provide only the final updated deployment.

\end{tcolorbox}

\clearpage
\subsection{Coverage Physics Consistency}
\label{subsec:consistency}

\begin{figure}[ht]
    \centering
    \includegraphics[width=0.9\linewidth]{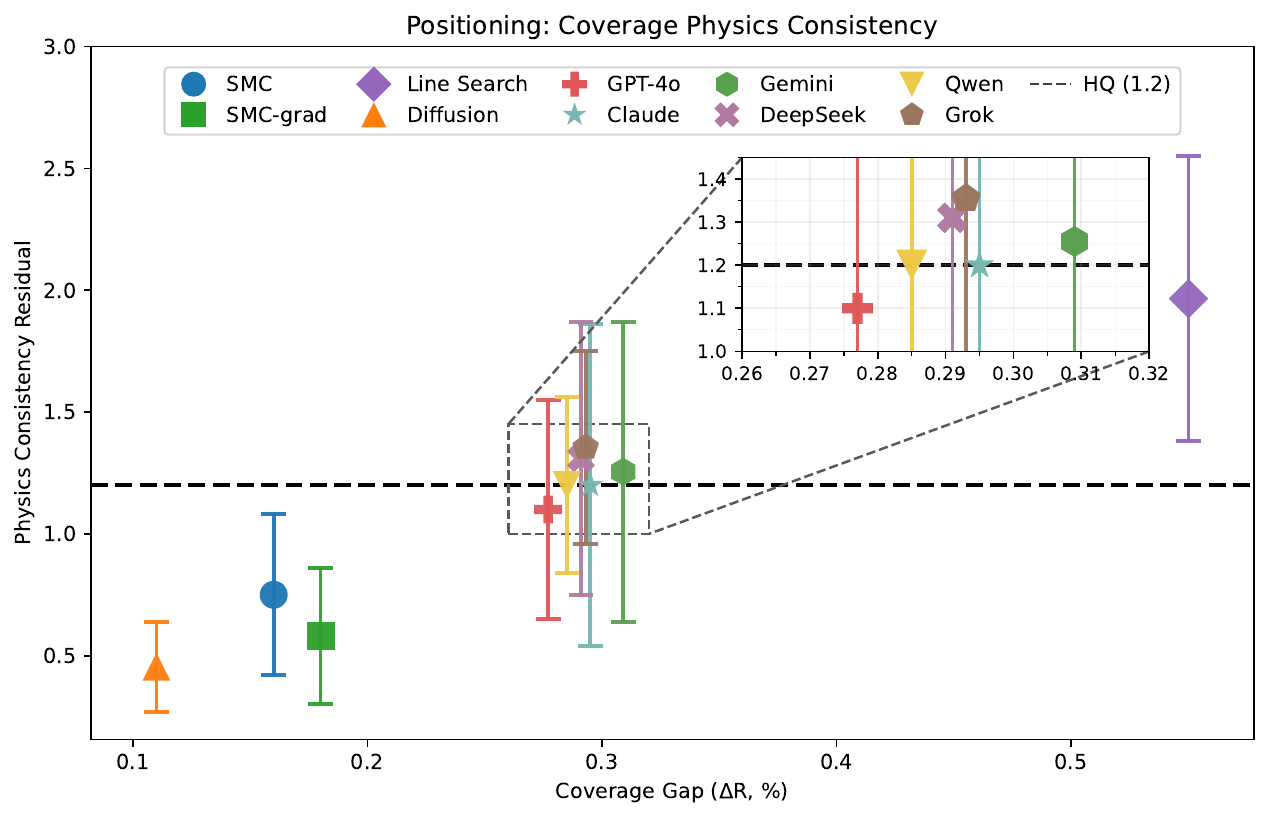}
    \caption{Coverage Physics Consistency.}
    \label{fig:Consistency}
\end{figure}

\textbf{(a) Interference Consistency Residual.}
We measure whether $\mathbf{P}$ satisfies the interference constraint $I(\mathbf{r}) \le \xi$ by the normalized violation residual
\begin{equation}
e_{I}(\mathbf{P})
=\frac{1}{|\Omega|}\sum_{\mathbf{r}\in\Omega}
\left[\max\!\left(0,\frac{I(\mathbf{r})-\xi}{\xi}\right)\right]^{2}.
\label{eq:eI}
\end{equation}

\textbf{(b) Throughput Consistency Residual.}
We quantify the feasibility of meeting the minimum throughput constraint $T(\mathbf{r}) \ge T_{\min}$ by
\begin{equation}
e_{T}(\mathbf{P})
=\frac{1}{|\Omega|}\sum_{\mathbf{r}\in\Omega}
\left[\max\!\left(0,\frac{T_{\min}-T(\mathbf{r})}{T_{\min}}\right)\right]^{2}.
\label{eq:eT}
\end{equation}

\textbf{(c) Minimum-Separation Consistency Residual.}
To evaluate whether the minimum separation constraint $\|\mathbf{p}_{i}-\mathbf{p}_{j}\|\ge d_{\min}$ is satisfied, we compute the averaged pairwise violation residual
\begin{equation}
e_{d}(\mathbf{P})
=\frac{2}{N(N-1)}\sum_{i<j}
\left[\max\!\left(0,\frac{d_{\min}-\|\mathbf{p}_{i}-\mathbf{p}_{j}\|}{d_{\min}}\right)\right]^{2}.
\label{eq:ed}
\end{equation}

\textbf{(d) Boundary Consistency Residual.}
To ensure all APs lie within the admissible indoor region $\mathcal{D}$, we measure the average out-of-domain distance using the projection operator $\Pi_{\mathcal{D}}(\cdot)$ as
\begin{equation}
e_{b}(\mathbf{P})
=\frac{1}{N}\sum_{n=1}^{N}
\left\|\mathbf{p}_{n}-\Pi_{\mathcal{D}}(\mathbf{p}_{n})\right\|_{2}^{2}.
\label{eq:eb}
\end{equation}

\textbf{Physics-level consistency.}
We aggregate the above constraint residuals into a unified physics-level consistency score:
\begin{equation}
e_{\mathrm{phy}}(\mathbf{P})
=\lambda_{I}e_{I}(\mathbf{P})
+\lambda_{T}e_{T}(\mathbf{P})
+\lambda_{d}e_{d}(\mathbf{P})
+\lambda_{b}e_{b}(\mathbf{P}),
\label{eq:ephy}
\end{equation}
where $\lambda_{I},\lambda_{T},\lambda_{d},\lambda_{b}$ denote weighting coefficients balancing heterogeneous constraint types.

\clearpage
\subsection{Visualization of Intermediate Reasoning and Iterative Optimization States}\label{append:Visualization of Intermediate Reasoning and Iterative Optimization States}

\begin{figure}[ht]
    \centering
    \includegraphics[width=0.8\linewidth]{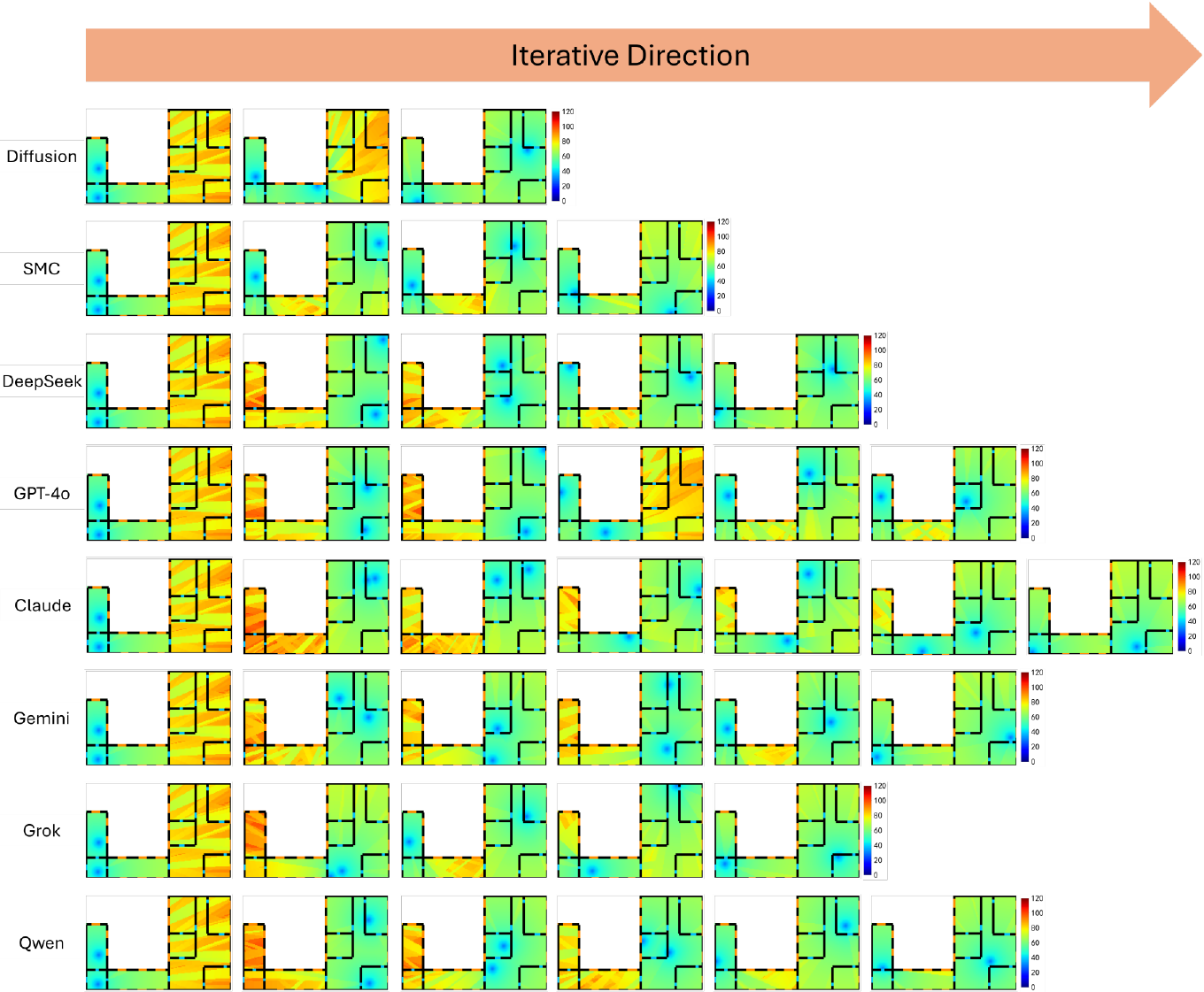}
    \caption{Visualization of the iterative reasoning and optimization trajectories in the Level 1 Building. The task requires coordinating 2 APs to achieve 100\% coverage threshold. Starting from a unified stochastic initialization, the subplots compare the state transitions of the diffusion sampling, weighted sampling, and various LLMs. The sequences illustrate how each baseline navigates the complex structural indoor layout to progressively refine AP coordinates.}
    \label{fig:reasoning_level_1}
\end{figure}

\begin{figure}[ht]
    \centering
    \includegraphics[width=0.8\linewidth]{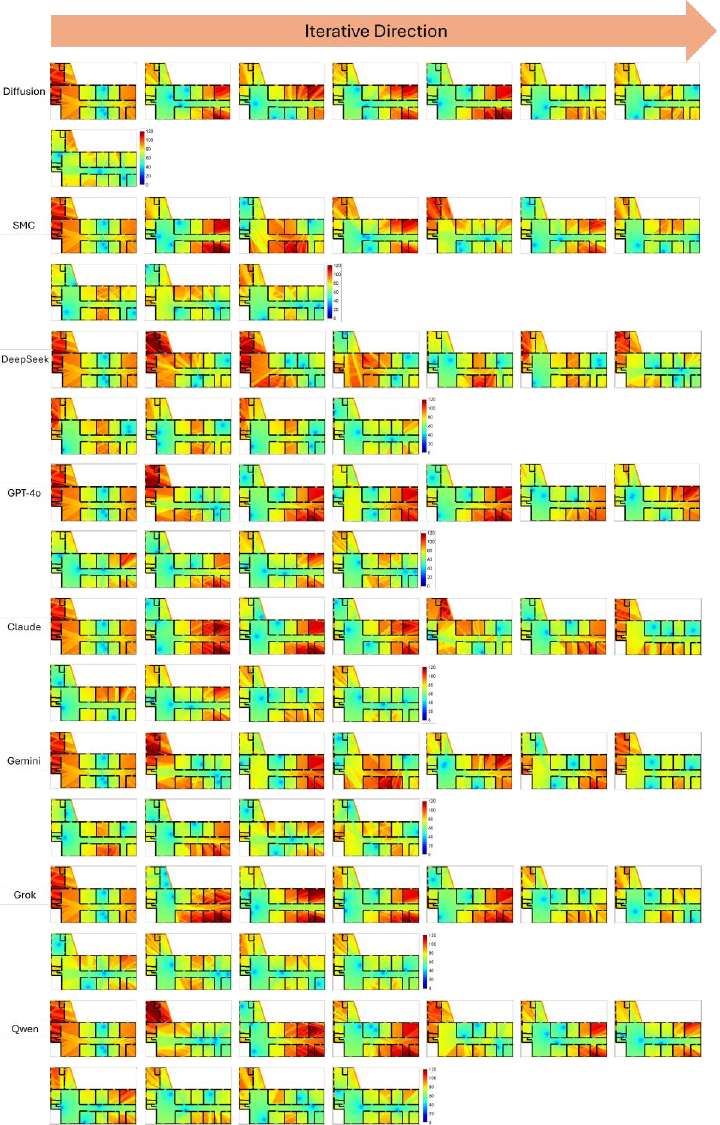}
    \caption{Visualization of the iterative reasoning and optimization trajectories in the Level 2 Building. The task requires coordinating 5 APs to achieve 99\% coverage threshold. Starting from a unified stochastic initialization, the subplots compare the state transitions of the diffusion sampling, weighted sampling, and various LLMs. The sequences illustrate how each baseline navigates the complex structural indoor layout to progressively refine AP coordinates.}
    \label{fig:reasoning_level_2}
\end{figure}

\begin{figure}[ht]
    \centering
    \includegraphics[width=0.8\linewidth]{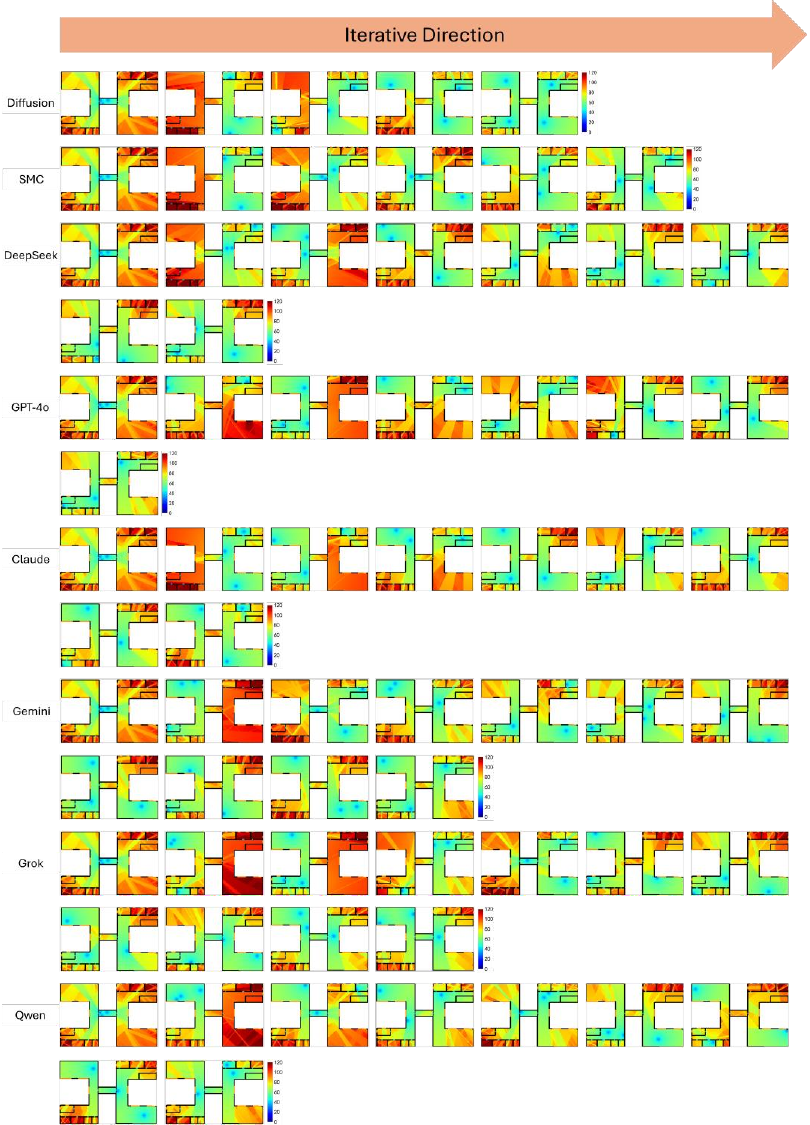}
    \caption{Visualization of the iterative reasoning and optimization trajectories in the Level 3 Building. The task requires coordinating 6 APs to achieve 95\% coverage threshold. Starting from a unified stochastic initialization, the subplots compare the state transitions of the diffusion sampling, weighted sampling, and various LLMs. The sequences illustrate how each baseline navigates the complex structural indoor layout to progressively refine AP coordinates.}
    \label{fig:reasoning_level_3}
\end{figure}

\begin{figure}[ht]
    \centering
    \includegraphics[width=0.8\linewidth]{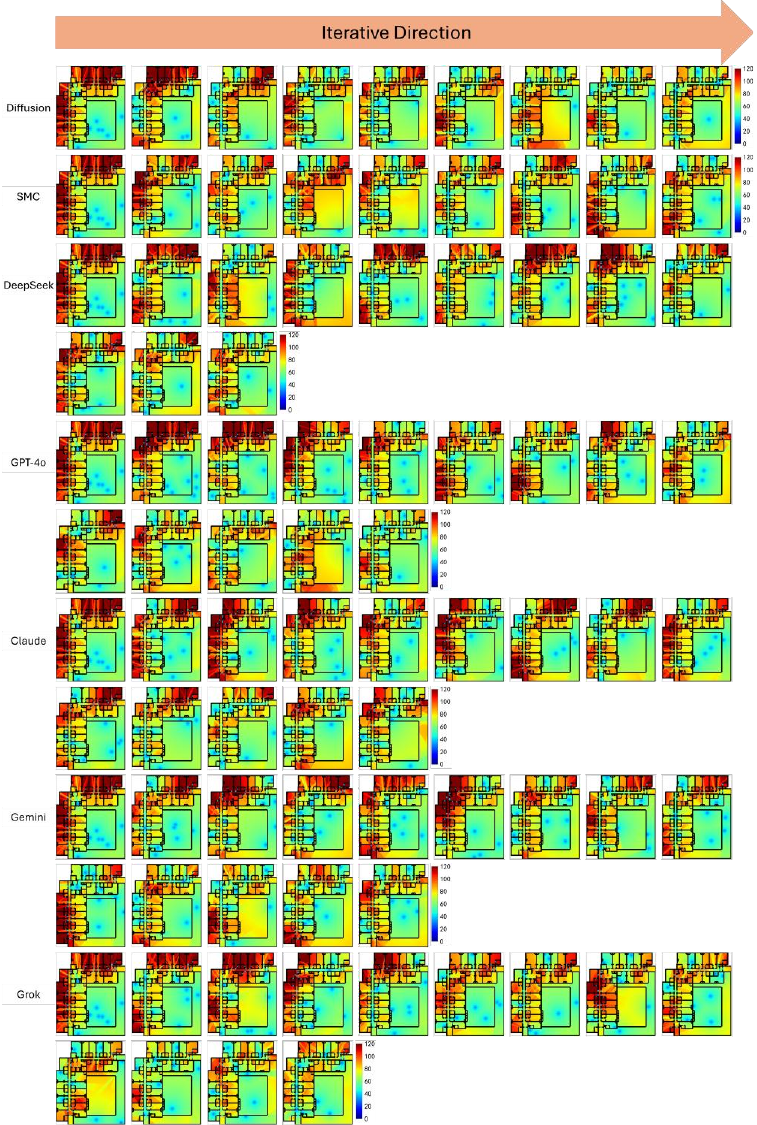}
    \caption{Visualization of the iterative reasoning and optimization trajectories in the Level 4 Building. The task requires coordinating 11 APs to achieve 90\% coverage threshold. Starting from a unified stochastic initialization, the subplots compare the state transitions of the diffusion sampling, weighted sampling, and various LLMs. The sequences illustrate how each baseline navigates the complex structural indoor layout to progressively refine AP coordinates.}
    \label{fig:reasoning_level_4}
\end{figure}


\end{document}